\DeclareMathOperator{\degree}{deg}
\DeclareMathOperator{\hash}{hash}
\DeclareMathOperator{\concatenate}{concatenate}
\DeclareMathOperator{\pairwisedist}{pairwise\_dist}
\DeclareMathOperator{\CAT}{CAT}
\DeclareMathOperator*{\argmin}{arg\,min}
\newtheorem{definition}{Definition}
\newtheorem{proposition}{Proposition}
\newtheorem{theorem}{Theorem}
\newtheorem{lemma}{Lemma}
\newtheorem{conjecture}{Conjecture}
\definecolor{mydarkblue}{rgb}{0,0.08,0.45}
\title{Wasserstein Weisfeiler-Lehman Graph Kernels}
\author{%
  Matteo Togninalli$^{1, 2, *}$ \\
  \small\texttt{matteo.togninalli@bsse.ethz.ch} \\
  \And
  Elisabetta Ghisu$^{1, 2, *}$ \\
  \small\texttt{elisabetta.ghisu@bsse.ethz.ch} \\
  \And
  Felipe Llinares-L\'{o}pez$^{1, 2}$ \\
  \small\texttt{felipe.llinares@bsse.ethz.ch} \\
  \And
  Bastian Rieck$^{1, 2}$\\
  \small\texttt{bastian.rieck@bsse.ethz.ch} \\
  \And
  Karsten Borgwardt$^{1, 2}$ \\
  \small\texttt{karsten.borgwardt@bsse.ethz.ch} \\\\
  \small\textsc{$^{1}$Department of Biosystems Science and Engineering, ETH Zurich, Switzerland}\\
    \small\textsc{$^{2}$SIB Swiss Institute of Bioinformatics, Switzerland}\\
    \small{$^{*}$These authors contributed equally}\\
}
\begin{document}

\maketitle

\begin{abstract}
Most graph kernels are an instance of the class of \mbox{$\mathcal{R}$-Convolution} kernels, which measure the similarity of objects by comparing their substructures.
Despite their empirical success, most graph kernels use a naive aggregation of the final set of substructures, usually a sum or average, thereby potentially discarding valuable information about the distribution of individual components. Furthermore, only a limited instance of these approaches can be extended to continuously attributed graphs. 
We propose a novel method that relies on the Wasserstein distance between the node feature vector distributions of two graphs, which allows finding subtler differences in data sets by considering graphs as high-dimensional objects  rather than simple means.
We further propose a Weisfeiler--Lehman-inspired embedding scheme for graphs with continuous node attributes and weighted edges, enhance it with the computed Wasserstein distance, and thereby improve the state-of-the-art prediction performance on several graph classification tasks.
\end{abstract}

\section{Introduction}
\label{introduction}

Graph-structured data have become ubiquitous across domains over the last decades, with examples ranging from social and sensor networks to chemo- and bioinformatics. 
Graph kernels~\citep{vishwanathan2010graph} have been highly successful in dealing with the complexity of graphs and have shown good predictive performance on a variety of classification problems~\citep{shervashidze2011weisfeiler,morris2016faster, yanardag2015deep}. 
Most graph kernels rely on the \mbox{$\mathcal{R}$-Convolution} framework~\citep{haussler1999convolution}, which decomposes structured objects into substructures to compute local similarities that are then aggregated. Although being successful in several applications, \mbox{$\mathcal{R}$-Convolution} kernels on graphs have limitations:
\begin{inparaenum}[(1)]
 \item the simplicity of the way in which the similarities between substructures are aggregated might limit their ability to capture complex characteristics of the graph;
 \item most proposed variants do not generalise to graphs with high-dimensional continuous node attributes, and extensions are far from being straightforward.
\end{inparaenum}
Various solutions have been proposed to address point~(1). For example, \citet{frohlich2005} introduced kernels based on the optimal assignment of node labels for molecular graphs, although these kernels are not positive definite~\citep{vert2008optimal}. Recently, another approach was proposed by \citet{kriege2016oa}, which employs a Weisfeiler--Lehman based colour refinement scheme and uses an optimal assignment of the nodes to compute the kernel. However, this method cannot handle continuous node attributes, leaving point~(2) as an open problem.
 
To overcome both limitations, we propose a method that combines the most successful vectorial graph representations derived from the graph kernel literature with ideas from optimal transport theory, which have recently gained considerable attention. 
In particular, improvements of the computational strategies to efficiently obtain Wasserstein distances~\citep{cuturi2013sinkhorn, altschuler2017near} have led to many applications in machine learning that use it for various purposes, ranging from generative models~\citep{arjovsky2017wasserstein} to new loss functions~\citep{frogner2015learning}. 
In graph applications, notions from optimal transport were used to tackle the graph alignment problem \citep{xu2019gromov}.
In this paper, we provide the theoretical foundations of our method, define a new graph kernel formulation, and present successful experimental results. Specifically, our main contributions can be summarised as follows:
\begin{compactitem}
    \item We present the graph Wasserstein distance, a new distance between graphs based on their node feature representations, and we discuss how kernels can be derived from it.
    \item We introduce a Weisfeiler--Lehman-inspired embedding scheme that works for both categorically labelled and continuously attributed graphs, and we couple it with our graph Wasserstein distance;
    \item We outperform the state of the art for graph kernels on traditional graph classification benchmarks with continuous attributes.
\end{compactitem}

\section{Background: graph kernels and Wasserstein distance}
\label{sec:background}

In this section, we introduce the notation that will be used throughout the manuscript. Moreover, we provide the necessary background on graph kernel methods and the Wasserstein distance.

\subsection{Graph kernels}

Kernels are a class of similarity functions that present attractive properties to be used in learning algorithms~\citep{scholkopf2002learning}.
Let $\mathcal{X}$ be a set and $k\colon \mathcal{X} \times \mathcal{X} \to \mathbb{R}$ be a function associated with a Hilbert space $\mathcal{H}$, such that there exists a map $\phi\colon \mathcal{X} \to \mathcal{H}$ with $k(x,y) = \left \langle \phi(x), \phi(y)\right \rangle_{\mathcal{H}}$. Then, $\mathcal{H}$ is a reproducing kernel Hilbert space~(RKHS) and $k$ is said to be a positive definite kernel. 
A positive definite kernel can be interpreted as a dot product in a high-dimensional space,
thereby permitting its use in any learning algorithm that relies on dot products, such as support vector machines~(SVMs), by virtue of the \emph{kernel trick}~\citep{scholkopf2001kernel}. Because ensuring positive definiteness is not always feasible, many learning algorithms were recently proposed to extend SVMs to indefinite kernels~\citep{ong2004learning, balcan2008theory, loosli2015learning, oglic2018learning}.

We define a graph as a tuple $G = (V,E)$, where $V$ and $E$ denote the set of nodes and edges, respectively; we further assume that the edges are undirected. Moreover, we denote the cardinality of nodes and edges for $G$ as  $|V|=n_G$ and $|E| = m_G$. 
For a node $v\in V$, we write $\mathcal{N}(v) = \{u\in V \mid (v,u)\in E\}$ and $|\mathcal{N}(v)| = \degree(v)$ to denote its first-order neighbourhood. We say that a graph is \emph{labelled} if its nodes have categorical labels. A label on the nodes is a function $l\colon V\to \Sigma$ that assigns to each node $v$ in $G$ a value $l(v)$ from a finite label alphabet $\Sigma$. 
Additionally, we say that a graph is \emph{attributed} if for each node $v\in V$ there exists an associated vector $a(v)\in\mathbb{R}^{m}$. In this paper, $a(v)$ are the node attributes and $l(v)$ are the categorical node labels of node $v$. In particular, the node attributes are high-dimensional continuous vectors, whereas the categorical node labels are assumed to be integer numbers (encoding either an ordered discrete value or a category). With the term ``node labels'', we will implicitly refer to categorical node labels.
Finally, a graph can have weighted edges, and the function $w\colon E\to\mathbb{R}$ defines the weight $w(e)$ of an edge $e := (v,u) \in E$. 

Kernels on graphs are generally defined using the \mbox{$\mathcal{R}$-Convolution} framework by \cite{haussler1999convolution}. The main idea is to decompose graph $G$ into substructures and to define a kernel value $k(G,G')$ as a combination of substructure similarities. A pioneer kernel on graphs was presented by~\citep{kashima2003}, where node and edge attributes are exploited for label sequence generation using a random walk scheme. Successively, a more efficient approach based on shortest paths~\citep{borgwardt2005shortest} was proposed, which computes each kernel value $k(G,G')$ as a sum of the similarities between each shortest path in $G$ and each shortest path in $G'$. Despite the practical success of \mbox{$\mathcal{R}$-Convolution} kernels, they often rely on aggregation strategies that ignore valuable information, such as the distribution of the substructures. An example is the Weisfeiler--Lehman~(WL) subtree kernel or one of its variants~\citep{shervashidze2009FastSK, shervashidze2011weisfeiler, Rieck19b}, which generates graph-level features by summing the contribution of the node representations. To avoid these simplifications, we want to use concepts from optimal transport theory, such as the Wasserstein distance, which can help to better capture the similarities between graphs.  

\subsection{Wasserstein distance}

The Wasserstein distance is a distance function between probability distributions defined on a given metric space. Let $\sigma$ and $\mu$ be two probability distributions on a metric space $M$ equipped with a ground distance $d$, such as the Euclidean distance.
\begin{definition}
\label{def:wasserstein}
The $L^p$-Wasserstein distance for $p \in \left [1, \infty \right )$ is defined as
\begin{equation}
\label{eq:wass_definition}
    W_p(\sigma, \mu) := \left (  \inf_{\gamma \in \Gamma(\sigma, \mu)} {\displaystyle \int_{M\times M} d(x,y)^p \operatorname{d}\!\gamma(x,y)} \right )^{\frac{1}{p}},
\end{equation}
where $\Gamma(\sigma, \mu)$ is the set of all transportation plans $\gamma \in \Gamma(\sigma, \mu)$ over $M \times M$ with marginals $\sigma$ and $\mu$ on the first and second factors, respectively.
\end{definition}
The Wasserstein distance satisfies the axioms of a metric, provided that $d$ is a metric~(see the monograph of \citet{villani2008optimal}, chapter 6, for a proof).
Throughout the paper, we will focus on the distance for $p = 1$ and we will refer to the $L^1$-Wasserstein distance when mentioning the Wasserstein distance, unless noted otherwise.

The Wasserstein distance is linked to the optimal transport problem \citep{villani2008optimal}, where the aim is to find the most ``inexpensive'' way, in terms of the ground distance, to transport all the probability mass from distribution $\sigma$ to match distribution $\mu$. 
An intuitive illustration can be made for the $1$-dimensional case, where the two probability distributions can be imagined as piles of dirt or sand. The Wasserstein distance, sometimes also referred to as the earth mover's distance~\citep{rubner2000earth}, can be interpreted as the minimum effort required to move the content of the first pile to reproduce the second pile.

In this paper, we deal with finite sets of node embeddings and not with continuous probability distributions. Therefore, we can reformulate the Wasserstein distance as a sum rather than an integral, and use the matrix notation commonly encountered in the optimal transport literature~\citep{rubner2000earth} to represent the transportation plan.
Given two sets of vectors $X \in \mathbb{R}^{n \times m}$ and $X'\in \mathbb{R}^{n' \times m}$, we can equivalently define the Wasserstein distance between them as
\begin{equation}
\label{eq:wassdiscreteformulation}
    W_1(X,X') :=  \min_{P \in \Gamma(X, X')} \left \langle P, M\right \rangle.
\end{equation}
Here, $M$ is the distance matrix containing the distances $d(x,x')$ between each element $x$ of $X$ and $x'$ of $X'$,  $P \in \Gamma$ is a transport matrix (or joint probability), and $\langle \cdot, \cdot \rangle$ is the Frobenius dot product. The transport matrix $P$ contains the fractions that indicate how to transport the values from $X$ to $X'$ with the minimal total transport effort. Because we assume that the total mass to be transported equals $1$ and is evenly distributed across the elements of $X$ and $X'$, the row and column values of $P$ must sum up to $\nicefrac{1}{n}$ and $\nicefrac{1}{n'}$, respectively.


\section{Wasserstein distance on graphs}
\label{sec:GWD}

The unsatisfactory nature of the aggregation step of current \mbox{$\mathcal{R}$-Convolution} graph kernels, which may mask important substructure differences by averaging, motivated us to have a finer distance measure between structures and their components. 
In parallel, recent advances in optimisation solutions for faster computation of the optimal transport problem inspired us to consider this framework for the problem of graph classification.
Our method relies on the following steps:
\begin{inparaenum}[(1)]
 \item transform each graph into a set of node embeddings,
 \item measure the Wasserstein distance between each pair of graphs, and
 \item compute a similarity matrix to be used in the learning algorithm.
\end{inparaenum}
Figure~\ref{fig:scheme} illustrates the first two steps, and 
Algorithm~\ref{alg:gwk} summarises the whole procedure. We start by defining an embedding scheme and illustrate how we integrate embeddings in the Wasserstein distance.

\begin{definition}[Graph Embedding Scheme]
\label{def:embed_scheme}
Given a graph $G=(V,E)$, a graph embedding scheme $f\colon \mathit{G} \to \mathbb{R}^{|V| \times m},\; f(G) = X_G$ is a function that outputs a fixed-size vectorial representation for each node in the graph. For each $v_i\in V$, the $i$-th row of $X_G$ is called the node embedding of $v_i$.
\end{definition}

Note that Definition~\ref{def:embed_scheme} permits treating node labels, which are categorical attributes, as one-dimensional attributes with $m=1$.

\begin{definition}[Graph Wasserstein Distance]
\label{def:graph_wass}
Given two graphs $G=(V,E)$ and $G'=(V',E')$, a graph embedding scheme $f\colon \mathit{G} \rightarrow \mathbb{R}^{|V| \times m} $ and a ground distance $d\colon \mathbb{R}^{m}\times \mathbb{R}^{m} \rightarrow \mathbb{R}$, we define the Graph Wasserstein Distance~(GWD) as
\begin{equation}
\label{eq:graph_wass}
D^f_{W}(G,G') := W_1(f(G),f(G')).
\end{equation}
\end{definition}

We will now propose a graph embedding scheme inspired by the WL kernel on categorically labeled graphs, extend it to continuously attributed graphs with weighted edges, and show how to integrate it with the GWD presented in Definition \ref{def:graph_wass}. 

\begin{figure*}[t]
\vskip 0.2in
  \centering
  \includegraphics[width=0.65\columnwidth]{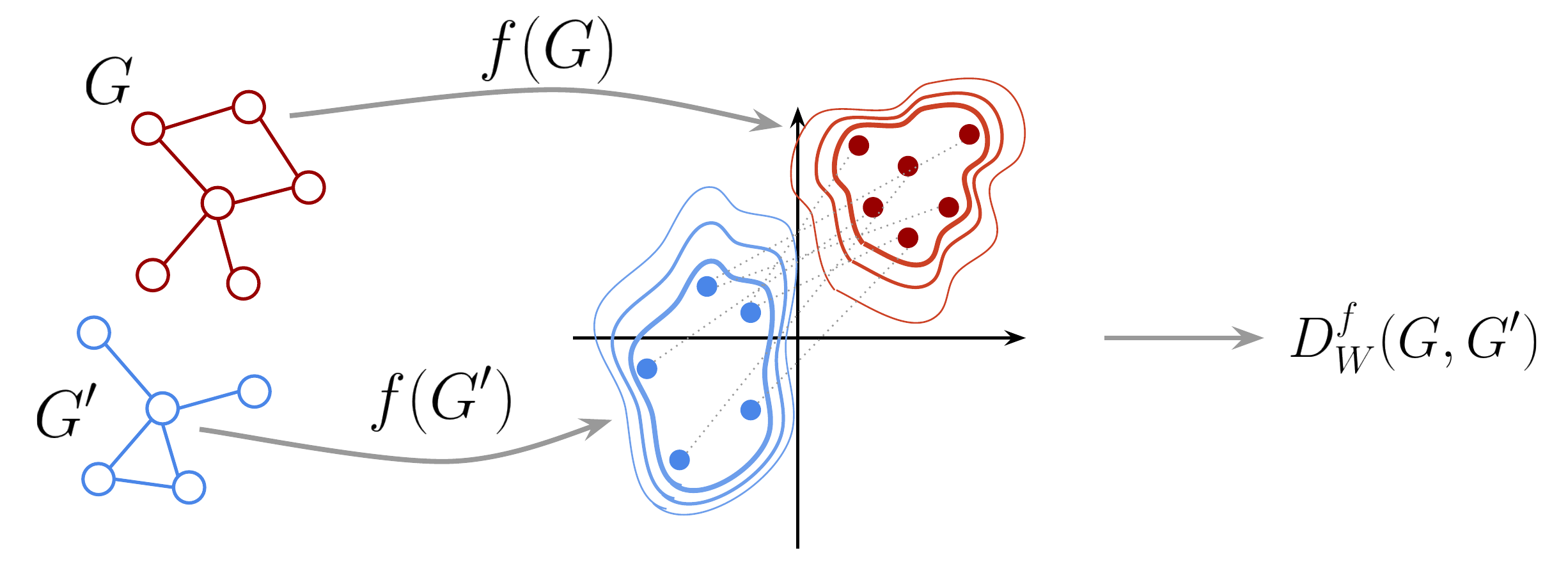}
\caption{Visual summary of the graph Wasserstein distance. First, $f$ generates embeddings for two input graphs $G$ and $G'$. Then, the Wasserstein distance between the embedding distributions is computed.}
\label{fig:scheme}
\vskip -0.2in
\end{figure*}

\subsection{Generating node embeddings} \label{sec:NodeEmbeddings}

\textbf{The Weisfeiler--Lehman scheme.} The Weisfeiler--Lehman subtree kernel~\citep{shervashidze2009FastSK, shervashidze2011weisfeiler}, designed for labelled non-attributed graphs, looks at similarities among subtree patterns, defined by a propagation scheme on the graphs that iteratively compares labels on the nodes and their neighbours. This is achieved by creating a sequence of ordered strings through the aggregation of the labels of a node and its neighbours; those strings are subsequently \textit{hashed} to create updated \textit{compressed} node labels.
With increasing iterations of the algorithm, these labels represent increasingly larger neighbourhoods of each node, allowing to compare more extended substructures. 

Specifically, consider a graph $G = (V, E)$, let $\ell^0(v) = \ell(v)$ be the initial node label of $v$ for each $v\in V$, and let $H$ be the number of WL iterations. Then, we can define a recursive scheme to compute $\ell^{h}(v)$ for $h=1,\ldots, H$ by looking at the ordered set of neighbours labels $\mathcal{N}^{h}(v) = \{\ell^{h}(u_0),\ldots,\ell^{h}(u_{\degree(v)-1})\}$ as 
\begin{equation}
    \ell^{h+1}(v) = \hash(\ell^{h}(v), \mathcal{N}^h(v)).
\end{equation}

We call this procedure the WL labelling scheme. As in the original publication~\citep{shervashidze2009FastSK}, we use perfect hashing for the $\hash$ function, so nodes at iteration $h+1$ will have the same label if and only if their label and those of their neighbours are identical at iteration $h$. 

\textbf{Extension to continuous attributes.} For graphs with continuous attributes $a(v)\in\mathbb{R}^{m}$, we need to improve the WL refinement step, whose original definition prohibits handling the continuous case. The key idea is to create an explicit propagation scheme that leverages and updates the current node features by averaging over the neighbourhoods.
Although similar approaches have been implicitly investigated for computing node-level kernel similarities~\citep{neumann2016, morris2016faster}, they rely on additional hashing steps for the continuous features. 
Moreover, we can easily account for edge weights by considering them in the average calculation of each neighbourhood.
Suppose we have a continuous attribute $a^{0}(v) = a(v)$ for each node $v \in G$. Then, we recursively define
\begin{equation}
a^{h+1}(v) =  \frac{1}{2}\left(a^{h}(v) + \frac{1}{\degree(v)}  \sum_{u\in \mathcal{N}(v)} w\left(\left(v,u\right)\right)\cdot a^{h}(u)\right).
\end{equation}
When edge weights are not available, we set $w(u, v) = 1$.
We consider the weighted average of the neighbourhood attribute values instead of a sum and add the $\nicefrac{1}{2}$ factor because we want to ensure a similar scale of the features across iterations; in fact, we  concatenate such features for building our proposed kernel~(see Definition~\ref{def:dwl_embedding} for more details) and observe better empirical results with similarly scaled features. 
Although this is not a test of graph isomorphism, this refinement step can be seen as an intuitive extension for continuous attributes of the one used by the WL subtree kernel on categorical node labels, a widely successful baseline. 
Moreover, it resembles the propagation scheme used in many graph neural networks, which have proven to be successful for node classification on large data sets~\citep{duvenaud2015convolutional, kipf2017semi, klicpera2018combining}. 
Finally, its ability to account for edge weights makes it applicable to all types of graphs without having to perform a hashing step~\citep{morris2016faster}. 
Further extensions of the refinement step to account for high-dimensional edge attributes are left for future work. A straightforward example would be to also apply the scheme on the \emph{dual} graph~(where each edge is represented as a node, and connectivity is established if two edges in the primal graph share the same node) to then combine the obtained kernel with the kernel obtained on primal graphs via appropriate weighting.

\textbf{Graph embedding scheme.} Using the recursive procedure described above, we propose a WL-based graph embedding scheme that generates node embeddings from the node labels or attributes of the graphs. In the following, we use $m$ to denote the dimensionality of the node attributes~($m = 1$ for the categorical labels).

\begin{definition}[WL features]
\label{def:dwl_embedding}
Let $G = (V,E)$ and let $H$ be the number of WL iterations. Then, for every $h \in \{0,\ldots,H\}$, we define the WL features as 
\begin{equation}
 X^h_G = [x^h(v_1),\ldots,x^h(v_{n_G})]^{T},
\end{equation}
where $x^h(\cdot)=\ell^h(\cdot)$ for categorically labelled graphs and $x^h(\cdot)=a^h(\cdot)$ for continuously attributed graphs. We refer to $X^h_G\in \mathbb{R}^{n_{G}\times m}$ as the \emph{node features} of graph $G$ at iteration $h$. Then, the node embeddings of graph $G$ at iteration $H$ are defined as
\begin{equation}
\begin{split}\label{embed}
f^H\colon G &\to \mathbb{R}^{n_G\times (m(H+1))}\\
G &\mapsto \concatenate(X^0_G, \ldots, X^H_G).
\end{split}
\end{equation}

\end{definition}

We observe that a graph can be both \textit{categorically labelled} and \textit{continuously attributed}, and one could extend the above scheme by jointly considering this information (for instance, by concatenating the node features).
However, we will leave this scenario as an extension for future work; thereby, we avoid having to define an appropriate distance measure between categorical and continuous data, as this is a long-standing issue \citep{stevens1946theory}.

\subsection{Computing the Wasserstein distance}

Once the node embeddings are generated by the graph embedding scheme, we evaluate the pairwise Wasserstein distance between graphs. We start by computing the ground distances between each pair of nodes. For categorical node features, we use the normalised Hamming distance:
\begin{equation}
\label{eq:hammingdist}
d_{\text{Ham}}(v, v') = \frac{1}{H+1}\sum_{i=1}^{H+1} \rho (v_{i}, v'_{i}), \; \; 
\rho(x, y)=\left\{\begin{array}{l}
1, \; x \neq y
\\ 
0, \; x = y
\end{array}\right.
\end{equation}

The Hamming distance can be pictured as the normalised sum of discrete metric $\rho$ on each of the features.
The Hamming distance equals $1$ when two vectors have no features in common and $0$ when the vectors are identical. 
We use the Hamming distance as, in this case, the Weisfeiler--Lehman features are indeed categorical, and values carry no meaning.
For continuous node features, on the other hand, we employ the Euclidean distance: 
\begin{equation}
    d_E(v,v')=||v-v'||_2.
\end{equation}

Next, we substitute the ground distance into the equation of Definition~\ref{def:wasserstein} and compute the Wasserstein distance using a network simplex method~\citep{peyre2019computational}. 

\textbf{Computational complexity.}
Naively, the computation of the Wasserstein Distance has a complexity of $\mathcal{O}(n^3log(n))$, with $n$ being the cardinality of the indexed set of node embeddings, i.e.,\ the number of nodes in the two graphs. Nevertheless, efficient speedup tricks can be employed. In particular, approximations relying on Sinkhorn regularisation have been proposed~\citep{cuturi2013sinkhorn}, some of which reduce the computational burden to \textit{near-linear time} while preserving accuracy~\citep{altschuler2017near}. Such speedup strategies become incredibly useful for larger data sets, i.e.,\ graphs with thousands of nodes, and can be easily integrated into our method. See  Appendix~\ref{app:runtime} for a practical discussion.


\begin{algorithm}[tb]
   \caption{Compute Wasserstein graph kernel}
   \label{alg:gwk}
\begin{algorithmic}
   \STATE {\bfseries Input:} Two graphs $G_1$, $G_2$; graph embedding scheme $f^H$; ground distance $d$; $\lambda$.
   \STATE {\bfseries Output:} kernel value $k_{WWL}(G_1,G_2)$.
   \\
   \STATE $X_{G_1} \leftarrow f^H(G_1); \; X_{G_2} \leftarrow f^H(G_2)$ \COMMENT{Generate node embeddings}
   \STATE $D \leftarrow \pairwisedist(X_{G_1},X_{G_2},d)$ \COMMENT{Compute the ground distance between each pair of nodes}
   \STATE $D_W(G_1,G_2) = \min_{P \in \Gamma} \left \langle P, D\right \rangle$ \COMMENT{Compute the Wasserstein distance}
   
   \STATE $k_W(G_1,G_2) \leftarrow e^{- \lambda D_W(G_1,G_2)}$
\end{algorithmic}
\end{algorithm}

\section{From Wasserstein distance to kernels}
\label{sec:kernels}

From the graph Wasserstein distance, one can construct a similarity measure to be used in a learning algorithm.
In this section, we propose a new graph kernel, state some claims about its (in)definiteness, and elaborate on how to use it for classifying graphs with continuous and categorical node labels.

\begin{definition}[Wasserstein Weisfeiler--Lehman]
\label{def:wwl}
Given a set of graphs $\mathcal{G} = \{G_1,\ldots, G_N\}$  and the GWD defined for each pair of graph on their WL embeddings, we define the Wasserstein Weisfeiler--Lehman~(WWL) kernel as 
\begin{equation}
    K_{\mathrm{\scriptscriptstyle{WWL}}} = e^{-\lambda D^{f_{\mathrm{\scriptscriptstyle{WL}}}}_W}.
\end{equation}
\end{definition}

This is an instance of a Laplacian kernel, which was shown to offer favourable conditions for positive definiteness in the case of non-Euclidean distances~\citep{feragen2015geodesic}. Obtaining the WWL kernel concludes the procedure described in Algorithm \ref{alg:gwk}. In the remainder of this section, we distinguish between the categorical WWL kernel, obtained on graphs with categorical labels, and the continuous WWL kernel, obtained on continuously attributed graphs via the graph embedding schemes described in Section~\ref{sec:NodeEmbeddings}.

For Euclidean spaces, obtaining positive definite kernels from distance functions is a well-studied topic~\citep{haasdonk2004learning}. 
However, the Wasserstein distance in its general form is not isometric, i.e.,\ there is no metric-preserving mapping to an $L^2$-norm, as the metric space it induces strongly depends on the chosen ground distance~\citep{figalli2011optimal}.
Therefore, despite being a metric, it is not necessarily possible to derive a positive definite kernel from the Wasserstein distance in its general formulation, because the classical approaches~\citep{haasdonk2004learning} cannot be applied here.
Nevertheless, as a consequence of using the Laplacian kernel~\citep{feragen2015geodesic}, we can show that, in the setting of categorical node labels, the obtained kernel is positive definite.
\begin{theorem}
The categorical WWL kernel is positive definite for all $\lambda > 0$.
\label{th:pd}
\end{theorem}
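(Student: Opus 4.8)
The plan is to reduce positive definiteness of $K_{\mathrm{\scriptscriptstyle{WWL}}} = e^{-\lambda D^{f_{\mathrm{\scriptscriptstyle{WL}}}}_W}$ to a statement about the graph Wasserstein distance being of \emph{negative type}. By Schoenberg's classical theorem, which underlies the Laplacian-kernel result of \citet{feragen2015geodesic}, for a symmetric $D$ with vanishing diagonal the kernel $e^{-\lambda D}$ is positive definite for every $\lambda > 0$ if and only if $D$ is conditionally negative definite, i.e.\ $\sum_{i,j} c_i c_j\, D(G_i, G_j) \le 0$ whenever $\sum_i c_i = 0$. Hence it suffices to exhibit an isometric embedding of $(\mathcal{G}, D^{f_{\mathrm{\scriptscriptstyle{WL}}}}_W)$ into an $L^1$ space, because $L^1$ metrics are of negative type and this property is inherited by isometric copies.

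The first concrete step is to analyse the categorical ground distance and show that the normalised Hamming distance is, on the label sequences that actually occur, an \emph{ultrametric}. Since we use perfect hashing, $\ell^{h+1}(v) = \hash(\ell^{h}(v), \mathcal{N}^h(v))$ injectively encodes $\ell^h(v)$, so equality of $\ell^{h+1}$ forces equality of $\ell^{h}$; the partitions of the nodes induced by $\ell^h$ are therefore successively refined as $h$ grows. Consequently, two label sequences that first disagree at iteration $h^*$ disagree at every subsequent iteration, giving $d_{\text{Ham}}(v,v') = (H+1-h^*)/(H+1)$. This is precisely the path metric of the rooted \emph{refinement tree} whose leaves are the distinct label sequences, with a common weight $1/(2(H+1))$ on every edge, and it satisfies the ultrametric inequality.

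The key step is to lift this tree structure to the Wasserstein level. I represent each graph by the uniform empirical measure $\mu_G$ over the leaves of this (global, dataset-wide) refinement tree. For a tree ground metric the $1$-Wasserstein distance has the closed form $W_1(\mu_G,\mu_{G'}) = \sum_e w_e\,\lvert \mu_G(T_e) - \mu_{G'}(T_e)\rvert$, the sum running over edges $e$ with $T_e$ the subtree below $e$. This is exactly the $\ell_1$ distance between the vectors $(w_e\,\mu_G(T_e))_e$, so $G \mapsto (w_e\,\mu_G(T_e))_e$ is an isometric embedding of $(\mathcal{G}, D^{f_{\mathrm{\scriptscriptstyle{WL}}}}_W)$ into $\ell_1$. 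Combined with the reduction of the first paragraph, $D^{f_{\mathrm{\scriptscriptstyle{WL}}}}_W$ is of negative type, and therefore $K_{\mathrm{\scriptscriptstyle{WWL}}}$ is positive definite for all $\lambda > 0$.

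The hard part will be the Wasserstein lifting step: justifying the tree closed-form and, in particular, ensuring that a \emph{single} $\ell_1$ embedding serves all graphs simultaneously, including graphs of different sizes with different uniform masses $1/n_G$. I would build the refinement tree once over the union of all labels appearing in $\mathcal{G}$, so that every $\mu_G$ lives on a common leaf set; the cardinality mismatch is then harmless, since it only rescales the total masses and leaves the geometry intact. I would also double-check the edge-weight bookkeeping so that the tree path metric reproduces the normalised Hamming distance exactly, and confirm the standard fact that $\ell_1$ (and hence any isometric copy of it) is of negative type, which is what powers the final appeal to Schoenberg.
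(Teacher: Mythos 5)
Your proof is correct, but it reaches conditional negative definiteness of $D^{f_{\mathrm{\scriptscriptstyle{WL}}}}_W$ by a genuinely different route than the paper. Both arguments share the same endpoints: the structural fact that perfect hashing makes WL disagreements persist across iterations, and the final appeal to the Laplacian-kernel/Schoenberg equivalence (Proposition~\ref{prop:feragen2}). In between, the paper proves two simultaneous-optimality lemmas (Lemmas~\ref{lemma:discrete_distance} and~\ref{lemma:besttransplan}) showing that one transport plan is optimal at once for the Hamming cost and for every per-iteration discrete-metric cost, decomposes the GWD as an average over iterations of discrete-metric optimal transport problems, and then cites \citep{gardner2017definiteness} for the conditional negative definiteness of each summand. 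You instead observe that persistence of disagreements makes the normalised Hamming distance an ultrametric --- the path metric of the WL refinement tree --- and invoke the closed form of $W_1$ under a tree ground metric, $W_1(\mu_G,\mu_{G'}) = \sum_e w_e\,\lvert\mu_G(T_e)-\mu_{G'}(T_e)\rvert$, to produce an explicit isometric embedding of $(\mathcal{G},D^{f_{\mathrm{\scriptscriptstyle{WL}}}}_W)$ into $\ell_1$; negative type of $\ell_1$ then finishes the argument. Your route buys two things the paper's does not: it is essentially self-contained (negative type of $\ell_1$ is elementary, so no external result on discrete-metric Wasserstein distances is needed), and the tree closed form shows that the categorical GWD can be evaluated in time linear in the number of tree edges, with no LP solve at all. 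The paper's route, in exchange, isolates the structure of the optimal transport plans themselves, which is what its Theorem~\ref{th:neg} is built on. The points you flagged as needing care --- building a single refinement tree over the shared WL dictionary, checking that uniform measures with differing atom masses $1/n_G$ are still probability measures on a common leaf set, and proving the tree formula (e.g.\ by the cut argument that any feasible plan must move at least $\lvert\mu_G(T_e)-\mu_{G'}(T_e)\rvert$ mass across edge $e$, with a matching greedy construction) --- are exactly the right ones, and all of them go through.
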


For a proof, see Sections~\ref{app:definiteness} and~\ref{app:discretewwl} in the Appendix.
By contrast, for the continuous case, establishing the definiteness of the obtained kernel remains an open problem. We refer the reader to Section~\ref{sec:Continuous embeddings} in the supplementary materials for
further discussions and conjectures.

Therefore, to ensure the theoretical and practical correctness of our results \emph{in the continuous case}, we employ recently developed methods for learning with indefinite kernels. Specifically, we use learning methods for Kre\u{\i}n spaces, which have been specifically designed to work with indefinite kernels~\citep{ong2004learning}; in general, kernels that are not positive definite induce reproducing kernel Kre\u{\i}n spaces~(RKKS). These spaces can be seen as a generalisation of reproducing kernel Hilbert spaces, with which they share similar mathematical properties, making them amenable to machine learning techniques.
Recent algorithms~\citep{loosli2015learning, oglic2018learning} are capable of solving learning problems in RKKS; their results indicate that there are clear benefits~(in terms of classification performance, for example) of learning in such spaces.
Therefore, when evaluating WWL, we will use a Kre\u{\i}n SVM~(KSVM, \citep{loosli2015learning}) as a classifier for the case of continuous attributes.

\begin{table}[t]
\centering
\caption{Classification accuracies on graphs with categorical node labels. Comparison of Weisfeiler--Lehman kernel (WL), optimal assignment kernel (WL-OA), and our method (WWL).}
\label{results_disc}

  %
  \renewrobustcmd{\bfseries}{\fontseries{b}\selectfont}
  \renewrobustcmd{\boldmath}{}
  \sisetup{%
    mode         = text,
    table-format = 2.2(2),
    tight-spacing,
  }
  \fontsize{9.5}{10.5}\selectfont
\begin{sc}
    \setlength{\tabcolsep}{5.00pt}
    
    \begin{tabular}{lcccccc}
    \toprule
    Method & MUTAG & PTC-MR & NCI1  & PROTEINS & D$\&$D & ENZYMES\\
    \midrule
      V & \num{85.39 \pm 0.73} & \num{58.35 \pm 0.20}\phantom{$^\ast$} & \num{64.22 \pm 0.11} & \num{72.12 \pm 0.19}\phantom{$^\ast$} & \num{78.24 \pm 0.28} & \num{22.72 \pm 0.56}\\
    E & \num{84.17 \pm 1.44} & \num{55.82 \pm 0.00}\phantom{$^\ast$} & \num{63.57 \pm 0.12} & \num{72.18 \pm 0.42}\phantom{$^\ast$} & \num{75.49 \pm 0.21} & \num{21.87 \pm 0.64}\\
    \midrule
    WL    & \num{85.78 \pm 0.83} & \num{61.21 \pm 2.28}\phantom{$^\ast$} & \num{85.83 \pm 0.09}  & \num{74.99 \pm 0.28}\phantom{$^\ast$} & \num{78.29 \pm 0.30} & \num{53.33 \pm 0.93}\\
    WL-OA & \bfseries\num{87.15 \pm 1.82} & \num{60.58 \pm 1.35}\phantom{$^\ast$} & \bfseries\num{86.08 \pm 0.27} &  \bfseries\num{76.37 \pm 0.30}$^\ast$ & \num{79.15 \pm 0.33} & \bfseries\num{58.97 \pm 0.82}\\
    \midrule
    WWL  & \bfseries\num{87.27 \pm 1.50} & \bfseries\num{66.31 \pm  1.21}$^\ast$ & \num{85.75 \pm 0.25} &  \num{74.28 \pm 0.56}\phantom{$^\ast$} & \bfseries\num{79.69 \pm 0.50} & \bfseries\num{59.13 \pm 0.80}\\
    \bottomrule
    \end{tabular}
\end{sc}
\end{table}

\begin{table}[t]
\centering
\caption{Classification accuracies on graphs with continuous node and/or edge attributes. Comparison of hash graph kernel (HGK-WL, HGK-SP), GraphHopper kernel (GH), and our method (WWL).}
\label{results_cont}

  %
  %
  \renewrobustcmd{\bfseries}{\fontseries{b}\selectfont}
  \renewrobustcmd{\boldmath}{}
  \sisetup{%
    mode         = text,
    table-format = 2.2(2),
    tight-spacing,
  }
\begin{sc}
 \setlength{\tabcolsep}{2.25pt}
 \fontsize{9}{10}\selectfont
    \begin{tabular}{lcccccccc}
        \toprule
        Method & ENZYMES           & PROTEINS         &  IMDB-B & BZR & COX2 & BZR-MD & COX2-MD\\
        \midrule
        VH-C & \num{47.15 \pm 0.79}\phantom{$^\ast$} & \num{60.79 \pm 0.12}\phantom{$^\ast$} & \num{71.64 \pm 0.49}\phantom{$^\ast$} & \num{74.82 \pm 2.13}\phantom{$^\ast$} & \num{48.51 \pm 0.63} & \num{66.58 \pm 0.97} & \num{64.89 \pm 1.06}  \\
        RBF-WL  & \num{68.43 \pm 1.47}\phantom{$^\ast$} & \num{75.43 \pm 0.28}\phantom{$^\ast$} & \num{72.06 \pm 0.34}\phantom{$^\ast$} & \num{80.96 \pm 1.67}\phantom{$^\ast$} &\num{75.45 \pm 1.53} &  \bfseries{\num{69.13 \pm 1.27}} & \num{71.83 \pm 1.61}\\
        \midrule
        HGK-WL  & \num{63.04 \pm 0.65}\phantom{$^\ast$} & \num{75.93 \pm 0.17}\phantom{$^\ast$} & \num{73.12 \pm 0.40}\phantom{$^\ast$} & \ \num{78.59 \pm 0.63}\phantom{$^\ast$} & \bfseries{\num{78.13 \pm 0.45}}  &   \bfseries{\num{68.94 \pm 0.65}} & \num{74.61 \pm 1.74} \\
        HGK-SP  & \num{66.36 \pm 0.37}\phantom{$^\ast$} & \num{75.78 \pm 0.17}\phantom{$^\ast$} & \num{73.06 \pm 0.27}\phantom{$^\ast$} & \num{76.42 \pm 0.72}\phantom{$^\ast$} & \num{72.57 \pm 1.18}  & \num{66.17 \pm 1.05} & \num{68.52 \pm 1.00} \\
        \midrule
        GH      & \num{65.65\pm 0.80}\phantom{$^\ast$}  & \num{74.78\pm 0.29}\phantom{$^\ast$}  & \num{72.35 \pm 0.55}\phantom{$^\ast$} & \num{76.49 \pm 0.99}\phantom{$^\ast$} & \num{76.41 \pm 1.39} &  \bfseries{\num{69.14 \pm 2.08}} &  \num{66.20 \pm 1.05} \\
        \midrule
        WWL     & \bfseries\num{73.25\pm 0.87}$^\ast$ & \bfseries\num{77.91 \pm  0.80}$^\ast$  & \bfseries\num{74.37 \pm 0.83}$^\ast$ &  \bfseries\num{84.42 \pm 2.03}$^\ast$ & \bfseries{\num{78.29 \pm 0.47}} & \bfseries{\num{69.76 \pm 0.94}} & \bfseries\num{76.33 \pm 1.02} \\ 
        \bottomrule
    \end{tabular}
\end{sc}
\end{table}

\section{Experimental evaluation}
\label{sec:results}

In this section, we analyse how the performance of WWL compares with state-of-the-art graph kernels. In particular, we empirically observe that WWL
\begin{inparaenum}[(1)]
    \item is competitive with the best graph kernel for categorically labelled data, and
    \item outperforms all the state-of-the-art graph kernels for attributed graphs. 
\end{inparaenum}

\subsection{Data sets}

We report results on real-world data sets from multiple sources~\citep{borgwardt2005protein, vishwanathan2010graph,shervashidze2011weisfeiler} and use either their continuous attributes or categorical labels for evaluation. In particular, \textsc{MUTAG}, \textsc{PTC-MR}, \textsc{NCI1}, and \textsc{D\&D} are equipped with categorical node labels only; \textsc{ENZYMES} and  \textsc{PROTEINS} have both categorical labels and continuous attributes; \textsc{IMDB-B}, \textsc{BZR}, and \textsc{COX2} only contain continuous attributes; finally, \textsc{BZR-MD} and \textsc{COX2-MD} have both continuous node attributes and edge weights. 
Further information on the data sets is available in Supplementary Table~\ref{table:data}. 
Additionally, we report results on synthetic data~(\textsc{Synthie} and \textsc{SYNTHETIC-new}) in Appendix~\ref{app:additional_ds}. All the data sets have been downloaded from \citet{repoker2016}.

\subsection{Experimental setup}
\label{sec:expsetup}
We compare WWL with state-of-the-art graph kernel methods from the literature and relevant baselines, which we trained ourselves on the same splits~(see below). In particular, for the categorical case, we compare with WL~\citep{shervashidze2009FastSK} and WL-OA~\citep{kriege2016oa} as well as with the vertex~(V) and edge~(E) histograms. Because~\citep{kriege2016oa} already showed that the WL-OA is superior to previous approaches, we do not include the whole set of kernels in our comparison. For the continuously attributed data sets, we compare with two instances of the hash graph kernel~(HGK-SP; HGK-WL)~\citep{morris2016faster} and with the GraphHopper~(GH)~\citep{feragen2013scalable}. For comparison, we additionally use a continuous vertex histogram~(VH-C), which is defined as a radial basis function (RBF) kernel between the sum of the graph node embeddings. Furthermore,
to highlight the benefits of using the Wasserstein distance in our method, we replace it with an RBF kernel. Specifically, given two graphs $G_1 = (V_1,E_1)$ and $G_2 = (V_2, E_2)$, with $|V_1|=n_1$ and $|V_2| = n_2$, we first compute the Gaussian kernel between each pair of the node embeddings obtained in the same fashion as for WWL; therefore, we obtain a kernel matrix between node embeddings $K' \in n_1 \times n_2$. Next, we sum up the values $K_{s}$ = $\sum_{i=1}^{n_1}\sum_{j=1}^{n_2}K'_{i,j}$ and set $K(G_1,G_2) = K_{s}$. This procedure is repeated for each pair of graphs to obtain the final graph kernel matrix. We refer to this baseline as RBF-WL.

As a classifier, we use an SVM~(or a KSVM in the case of WWL) and 10-fold cross-validation, selecting the parameters on the training set only. We repeat each cross-validation split 10 times and report the average accuracy. We employ the same split for each evaluated method, thereby guaranteeing a fully comparable setup among all evaluated methods. Please refer to Appendix~\ref{app:hypersel} for details on the hyperparameter selection.

\paragraph{Implementation and computing infrastructure}
Available Python implementations can be used to compute the WL kernel~\citep{sugiyama2018graphker} and the Wasserstein distance~\citep{flamary2017pot}. 
We leverage these resources and make our code publicly available\footnote{\href{https://github.com/BorgwardtLab/WWL}{https://github.com/BorgwardtLab/WWL}}. We use the original implementations provided by the respective authors to compute the WL-OA, HGK, and GH methods.
All our analyses were performed on a shared server running Ubuntu 14.04.5 LTS, with 4 CPUs (Intel Xeon E7-4860 v2 @ 2.60GHz) each with 12 cores and 24 threads, and 512 GB of RAM.

\subsection{Results and discussion}
\label{sec:resdisc}

The results are evaluated by classification accuracy and summarised in Table~\ref{results_disc} and Table~\ref{results_cont} for the categorical labels and continuous attributes, respectively\footnote{
The best performing methods up to the resolution implied by the standard deviation across repetitions are highlighted in boldface. Additionally, to evaluate significance we perform 2-sample $t$-tests with a significance threshold of $0.05$ and Bonferroni correction for multiple hypothesis testing within each data set, significantly outperforming methods are denoted by an asterisk.}.

\subsubsection{Categorical labels}

On the categorical data sets, WWL is comparable to the WL-OA kernel; however, it improves over the classical WL. In particular, WWL largely improves over WL-OA in \textsc{PTC-MR} and is slightly better on \textsc{D\&D}, whereas WL-OA is better on \textsc{NCI1} and \textsc{PROTEINS}. 

Unsurprisingly,  our approach is comparable to the WL-OA, whose main idea is to solve the optimal assignment problem by defining Dirac kernels on histograms of node labels, using multiple iterations of WL. 
This formulation is similar to the one we provide for categorical data, but it relies on the optimal assignment rather than the optimal transport; therefore, it requires one-to-one mappings instead of continuous transport maps. Besides, we solve the optimal transport problem on the concatenated embeddings, hereby jointly exploiting representations at multiple WL iterations. Contrarily, the WL-OA performs an optimal assignment at each iteration of WL and only combines them in the second stage.
However, the key advantage of WWL over WL-OA is its capacity to account for continuous attributes.

\subsubsection{Continuous attributes}

In this setting, WWL significantly outperforms the other methods on $4$ out of $7$ data sets, is better on another one, and is on a par on the remaining $2$. We further compute the average rank of each method in the continuous setting, with WWL scoring as first. The ranks calculated from Table~\ref{results_cont} are WWL = $1$, HGK-WL = $2.86$, RBF-WL = $3.29$, HGK-SP = $4.14$, and VH-C = $5.86$.  This is a remarkable improvement over the current state of the art, and it indeed establishes a new one. When looking at the average rank of the method, WWL always scores first. Therefore, we raise the bar in kernel graph classification for attributed graphs. As mentioned in Section~\ref{sec:kernels}, the kernel obtained from continuous attributes is not necessarily positive definite. However, we empirically observe the kernel matrices to be positive definite~(up to a numerical error), further supporting our theoretical considerations~(see Appendix~\ref{app:definiteness}).
In practice, the difference between the results obtained from classical SVMs in RKHS and the results obtained with the KSVM approach is negligible.

\paragraph{Comparison with hash graph kernels }
The hash graph kernel~(HGK) approach is somewhat related to our propagation scheme. By using multiple hashing functions, the HGK method is capable of extending certain existing graph kernels to the continuous setting. This helps to avoid the limitations of perfect hashing, which cannot express small differences in continuous attributes. A drawback of the random hashing performed by HGK is that it requires additional parameters and introduces a stochastic element to the kernel matrix computation. By contrast, our propagation scheme is fully continuous and uses the Wasserstein distance to capture small differences in distributions of continuous node attributes. Moreover, the observed performance gap suggests that an entirely continuous representation of the graphs provides clear benefits over the hashing.

\section{Conclusion}
\label{sec:conclusion}

In this paper, we present a new family of graph kernels, the Wasserstein Weisfeiler--Lehman~(WWL) graph kernels.
Our experiments show that WWL graph kernels outperform the state of the art for graph classification in the scenario of continuous node attributes, while matching the state of the art in the categorical setting.
As a line of research for future work, we see great potential in the runtime improvement, thus, allowing applications of our method on regimes and data sets with larger graphs. In fact, preliminary experiments~(see Section~\ref{app:runtime} as well as Figure~\ref{fig:runtime} in the Appendix) already confirm the benefit of Sinkhorn regularisation when the average number of nodes in the graph increases. In parallel, it would be beneficial to derive approximations of the explicit feature representations in the RKKS, as this would also provide a consistent speedup. We further envision that major theoretical contributions could be made by defining theoretical bounds to ensure the positive definiteness of the WWL kernel in the case of continuous node attributes.
Finally, optimisation objectives based on optimal transport could be employed to develop new algorithms based on graph neural networks~\citep{duvenaud2015convolutional, kipf2017semi}. On a more general level, our proposed method provides a solid foundation of the use of optimal transport theory for kernel methods and highlights the large potential of optimal transport for machine learning.

\subsubsection*{Acknowledgments}
\label{sec:acknowledgments}

This work was funded in part by the Horizon 2020 project CDS-QUAMRI, Grant No. 634541 (E.G., K.B.), the Alfried Krupp Prize for Young University Teachers of the Alfried Krupp von Bohlen und Halbach-Stiftung (B.R., K.B.), and the SNSF Starting Grant ``Significant Pattern
Mining'' (F.L., K.B.).

\bibliographystyle{abbrvnat}
\bibliography{paper}

\begin{thebibliography}{47}
\providecommand{\natexlab}[1]{#1}
\providecommand{\url}[1]{\texttt{#1}}
\expandafter\ifx\csname urlstyle\endcsname\relax
  \providecommand{\doi}[1]{doi: #1}\else
  \providecommand{\doi}{doi: \begingroup \urlstyle{rm}\Url}\fi

\bibitem[Altschuler et~al.(2017)Altschuler, Weed, and
  Rigollet]{altschuler2017near}
J.~Altschuler, J.~Weed, and P.~Rigollet.
\newblock Near-linear time approximation algorithms for optimal transport via
  sinkhorn iteration.
\newblock In \emph{Advances in Neural Information Processing Systems~30}, pages
  1964--1974, 2017.

\bibitem[Arjovsky et~al.(2017)Arjovsky, Chintala, and
  Bottou]{arjovsky2017wasserstein}
M.~Arjovsky, S.~Chintala, and L.~Bottou.
\newblock Wasserstein {GAN}.
\newblock \emph{arXiv preprint arXiv:1701.07875}, 2017.

\bibitem[Balcan et~al.(2008)Balcan, Blum, and Srebro]{balcan2008theory}
M.-F. Balcan, A.~Blum, and N.~Srebro.
\newblock A theory of learning with similarity functions.
\newblock \emph{Machine Learning}, 72\penalty0 (1-2):\penalty0 89--112, 2008.

\bibitem[Berg et~al.(1984)Berg, Christensen, and Ressel]{berg1984harmonic}
C.~Berg, J.~P.~R. Christensen, and P.~Ressel.
\newblock \emph{Harmonic analysis on semigroups}.
\newblock Springer, Heidelberg, Germany, 1984.

\bibitem[Borgwardt and Kriegel(2005)]{borgwardt2005shortest}
K.~M. Borgwardt and H.-P. Kriegel.
\newblock Shortest-path kernels on graphs.
\newblock In \emph{Proceedings of the Fifth IEEE International Conference on
  Data Mining}, pages 74--81, 2005.

\bibitem[Borgwardt et~al.(2005)Borgwardt, Ong, Sch{\"o}nauer, Vishwanathan,
  Smola, and Kriegel]{borgwardt2005protein}
K.~M. Borgwardt, C.~S. Ong, S.~Sch{\"o}nauer, S.~Vishwanathan, A.~J. Smola, and
  H.-P. Kriegel.
\newblock Protein function prediction via graph kernels.
\newblock \emph{Bioinformatics}, 21:\penalty0 i47--i56, 2005.

\bibitem[Bridson and H{\"a}fliger(2013)]{bridson2013metric}
M.~R. Bridson and A.~H{\"a}fliger.
\newblock \emph{Metric spaces of non-positive curvature}.
\newblock Springer, Heidelberg, Germany, 2013.

\bibitem[Cuturi(2013)]{cuturi2013sinkhorn}
M.~Cuturi.
\newblock Sinkhorn distances: Lightspeed computation of optimal transport.
\newblock In \emph{Advances in Neural Information Processing Systems~26}, pages
  2292--2300, 2013.

\bibitem[Duvenaud et~al.(2015)Duvenaud, Maclaurin, Iparraguirre, Bombarell,
  Hirzel, Aspuru-Guzik, and Adams]{duvenaud2015convolutional}
D.~K. Duvenaud, D.~Maclaurin, J.~Iparraguirre, R.~Bombarell, T.~Hirzel,
  A.~Aspuru-Guzik, and R.~P. Adams.
\newblock Convolutional networks on graphs for learning molecular fingerprints.
\newblock In \emph{Advances in Neural Information Processing Systems~28}, pages
  2224--2232, 2015.

\bibitem[Feragen et~al.(2013)Feragen, Kasenburg, Petersen, de~Bruijne, and
  Borgwardt]{feragen2013scalable}
A.~Feragen, N.~Kasenburg, J.~Petersen, M.~de~Bruijne, and K.~Borgwardt.
\newblock Scalable kernels for graphs with continuous attributes.
\newblock In \emph{Advances in Neural Information Processing Systems~26}, pages
  216--224, 2013.

\bibitem[Feragen et~al.(2015)Feragen, Lauze, and Hauberg]{feragen2015geodesic}
A.~Feragen, F.~Lauze, and S.~Hauberg.
\newblock Geodesic exponential kernels: When curvature and linearity conflict.
\newblock In \emph{Proceedings of the IEEE Conference on Computer Vision and
  Pattern Recognition}, pages 3032--3042, 2015.

\bibitem[Figalli and Villani(2011)]{figalli2011optimal}
A.~Figalli and C.~Villani.
\newblock Optimal transport and curvature.
\newblock In \emph{Nonlinear PDE’s and Applications}, pages 171--217.
  Springer, Heidelberg, Germany, 2011.

\bibitem[Flamary and Courty(2017)]{flamary2017pot}
R.~Flamary and N.~Courty.
\newblock {POT}: {P}ython {O}ptimal {T}ransport library, 2017.
\newblock URL \url{https://github.com/rflamary/POT}.

\bibitem[Frogner et~al.(2015)Frogner, Zhang, Mobahi, Araya, and
  Poggio]{frogner2015learning}
C.~Frogner, C.~Zhang, H.~Mobahi, M.~Araya, and T.~A. Poggio.
\newblock Learning with a {W}asserstein loss.
\newblock In \emph{Advances in Neural Information Processing Systems~28}, pages
  2053--2061, 2015.

\bibitem[Fr\"{o}hlich et~al.(2005)Fr\"{o}hlich, Wegner, Sieker, and
  Zell]{frohlich2005}
H.~Fr\"{o}hlich, J.~K. Wegner, F.~Sieker, and A.~Zell.
\newblock Optimal assignment kernels for attributed molecular graphs.
\newblock In \emph{Proceedings of the 22nd International Conference on Machine
  Learning}, pages 225--232, 2005.

\bibitem[Gardner et~al.(2017)Gardner, Duncan, Kanno, and
  Selmic]{gardner2017definiteness}
A.~Gardner, C.~A. Duncan, J.~Kanno, and R.~R. Selmic.
\newblock On the definiteness of {E}arth {M}over's {D}istance and its relation
  to set intersection.
\newblock \emph{IEEE Transactions on Cybernetics}, 2017.

\bibitem[Haasdonk and Bahlmann(2004)]{haasdonk2004learning}
B.~Haasdonk and C.~Bahlmann.
\newblock Learning with distance substitution kernels.
\newblock In \emph{DAGM-Symposium}, 2004.

\bibitem[Haussler(1999)]{haussler1999convolution}
D.~Haussler.
\newblock Convolution kernels on discrete structures.
\newblock Technical report, Department of Computer Science, University of
  California, 1999.

\bibitem[Kashima et~al.(2003)Kashima, Tsuda, and Inokuchi]{kashima2003}
H.~Kashima, K.~Tsuda, and A.~Inokuchi.
\newblock Marginalized kernels between labeled graphs.
\newblock In \emph{Proceedings of the 20th International Conference on Machine
  Learning}, pages 321--328, 2003.

\bibitem[Kersting et~al.(2016)Kersting, Kriege, Morris, Mutzel, and
  Neumann]{repoker2016}
K.~Kersting, N.~M. Kriege, C.~Morris, P.~Mutzel, and M.~Neumann.
\newblock Benchmark data sets for graph kernels, 2016.
\newblock URL \url{http://graphkernels.cs.tu-dortmund.de}.

\bibitem[Kipf and Welling(2017)]{kipf2017semi}
T.~N. Kipf and M.~Welling.
\newblock Semi-supervised classification with graph convolutional networks.
\newblock In \emph{5th International Conference on Learning Representations},
  2017.

\bibitem[Klicpera et~al.(2019)Klicpera, Bojchevski, and
  Günnemann]{klicpera2018combining}
J.~Klicpera, A.~Bojchevski, and S.~Günnemann.
\newblock Combining neural networks with personalized pagerank for
  classification on graphs.
\newblock In \emph{7th International Conference on Learning Representations},
  2019.

\bibitem[Kolouri et~al.(2016)Kolouri, Zou, and Rohde]{kolouri2016sliced}
S.~Kolouri, Y.~Zou, and G.~K. Rohde.
\newblock Sliced {W}asserstein kernels for probability distributions.
\newblock In \emph{Proceedings of the IEEE Conference on Computer Vision and
  Pattern Recognition}, pages 5258--5267, 2016.

\bibitem[Kriege and Mutzel(2012)]{kriege2012subgraph}
N.~Kriege and P.~Mutzel.
\newblock Subgraph matching kernels for attributed graphs.
\newblock In \emph{Proceedings of the 29th International Conference on Machine
  Learning}, pages 1015--1022, 2012.

\bibitem[Kriege et~al.(2016)Kriege, Giscard, and Wilson]{kriege2016oa}
N.~M. Kriege, P.-L. Giscard, and R.~C. Wilson.
\newblock On valid optimal assignment kernels and applications to graph
  classification.
\newblock In \emph{Advances in Neural Information Processing Systems~29}, pages
  1623--1631, 2016.

\bibitem[Loosli et~al.(2015)Loosli, Canu, and Ong]{loosli2015learning}
G.~Loosli, S.~Canu, and C.~S. Ong.
\newblock Learning {SVM} in {K}re{\u\i}n spaces.
\newblock \emph{IEEE Transactions on Pattern Analysis and Machine
  Intelligence}, 38\penalty0 (6):\penalty0 1204--1216, 2015.

\bibitem[Morris et~al.(2016)Morris, Kriege, Kersting, and
  Mutzel]{morris2016faster}
C.~Morris, N.~M. Kriege, K.~Kersting, and P.~Mutzel.
\newblock Faster kernels for graphs with continuous attributes via hashing.
\newblock In \emph{Proceedings of the 16th IEEE International Conference on
  Data Mining}, pages 1095--1100, 2016.

\bibitem[Neumann et~al.(2016)Neumann, Garnett, Bauckhage, and
  Kersting]{neumann2016}
M.~Neumann, R.~Garnett, C.~Bauckhage, and K.~Kersting.
\newblock Propagation kernels: efficient graph kernels from propagated
  information.
\newblock \emph{Machine Learning}, 102\penalty0 (2):\penalty0 209--245, 2016.

\bibitem[Oglic and G{\"a}rtner(2018)]{oglic2018learning}
D.~Oglic and T.~G{\"a}rtner.
\newblock Learning in reproducing kernel kre{\i}n spaces.
\newblock In \emph{Proceedings of the 35th International Conference on Machine
  Learning}, pages 3859--3867, 2018.

\bibitem[Ong et~al.(2004)Ong, Mary, Canu, and Smola]{ong2004learning}
C.~S. Ong, X.~Mary, S.~Canu, and A.~J. Smola.
\newblock Learning with non-positive kernels.
\newblock In \emph{Proceedings of the 21st International Conference on Machine
  Learning}, 2004.

\bibitem[Peyr{\'e} et~al.(2019)Peyr{\'e}, Cuturi,
  et~al.]{peyre2019computational}
G.~Peyr{\'e}, M.~Cuturi, et~al.
\newblock Computational optimal transport.
\newblock \emph{Foundations and Trends{\textregistered} in Machine Learning},
  11\penalty0 (5-6):\penalty0 355--607, 2019.

\bibitem[Rabin et~al.(2011)Rabin, Peyr{\'e}, Delon, and
  Bernot]{rabin2011wasserstein}
J.~Rabin, G.~Peyr{\'e}, J.~Delon, and M.~Bernot.
\newblock Wasserstein barycenter and its application to texture mixing.
\newblock In \emph{International Conference on Scale Space and Variational
  Methods in Computer Vision}, pages 435--446, 2011.

\bibitem[Rieck et~al.(2019)Rieck, Bock, and Borgwardt]{Rieck19b}
B.~Rieck, C.~Bock, and K.~Borgwardt.
\newblock A persistent {W}eisfeiler--{L}ehman procedure for graph
  classification.
\newblock In \emph{Proceedings of the 36th International Conference on Machine
  Learning}, pages 5448--5458, 2019.

\bibitem[Rubner et~al.(2000)Rubner, Tomasi, and Guibas]{rubner2000earth}
Y.~Rubner, C.~Tomasi, and L.~J. Guibas.
\newblock The {E}arth {M}over's {D}istance as a metric for image retrieval.
\newblock \emph{International Journal of Computer Vision}, 40\penalty0
  (2):\penalty0 99--121, 2000.

\bibitem[Sch{\"o}lkopf(2001)]{scholkopf2001kernel}
B.~Sch{\"o}lkopf.
\newblock The kernel trick for distances.
\newblock In \emph{Advances in Neural Information Processing Systems~13}, pages
  301--307, 2001.

\bibitem[Sch{\"o}lkopf and Smola(2002)]{scholkopf2002learning}
B.~Sch{\"o}lkopf and A.~J. Smola.
\newblock \emph{Learning with kernels: support vector machines, regularization,
  optimization, and beyond}.
\newblock MIT press, 2002.

\bibitem[Shervashidze and Borgwardt(2009)]{shervashidze2009FastSK}
N.~Shervashidze and K.~M. Borgwardt.
\newblock Fast subtree kernels on graphs.
\newblock In \emph{Advances in Neural Information Processing Systems~22}, pages
  1660--1668, 2009.

\bibitem[Shervashidze et~al.(2011)Shervashidze, Schweitzer, Leeuwen, Mehlhorn,
  and Borgwardt]{shervashidze2011weisfeiler}
N.~Shervashidze, P.~Schweitzer, E.~J.~v. Leeuwen, K.~Mehlhorn, and K.~M.
  Borgwardt.
\newblock {W}eisfeiler-{L}ehman graph kernels.
\newblock \emph{Journal of Machine Learning Research}, 12:\penalty0 2539--2561,
  2011.

\bibitem[Shin-Ichi(2012)]{Shin-Ichi2012}
O.~Shin-Ichi.
\newblock Barycenters in {A}lexandrov spaces of curvature bounded below.
\newblock \emph{Advances in Geometry}, 14\penalty0 (4):\penalty0 571--587,
  2012.

\bibitem[Stevens(1946)]{stevens1946theory}
S.~S. Stevens.
\newblock On the theory of scales of measurement.
\newblock \emph{Science}, 103\penalty0 (2684):\penalty0 677--680, 1946.

\bibitem[Sugiyama et~al.(2018)Sugiyama, Ghisu, Llinares-López, and
  Borgwardt]{sugiyama2018graphker}
M.~Sugiyama, M.~E. Ghisu, F.~Llinares-López, and K.~Borgwardt.
\newblock graphkernels: R and python packages for graph comparison.
\newblock \emph{Bioinformatics}, 34\penalty0 (3):\penalty0 530--532, 2018.

\bibitem[Turner et~al.(2014)Turner, Mileyko, Mukherjee, and Harer]{Turner2014}
K.~Turner, Y.~Mileyko, S.~Mukherjee, and J.~Harer.
\newblock Fr{\'e}chet means for distributions of persistence diagrams.
\newblock \emph{Discrete {\&} Computational Geometry}, 52:\penalty0 44--70,
  2014.

\bibitem[Vert(2008)]{vert2008optimal}
J.-P. Vert.
\newblock The optimal assignment kernel is not positive definite.
\newblock \emph{arXiv preprint arXiv:0801.4061}, 2008.

\bibitem[Villani(2008)]{villani2008optimal}
C.~Villani.
\newblock \emph{Optimal transport: old and new}, volume 338.
\newblock Springer, Heidelberg, Germany, 2008.

\bibitem[Vishwanathan et~al.(2010)Vishwanathan, Schraudolph, Kondor, and
  Borgwardt]{vishwanathan2010graph}
S.~V.~N. Vishwanathan, N.~N. Schraudolph, R.~Kondor, and K.~M. Borgwardt.
\newblock Graph kernels.
\newblock \emph{Journal of Machine Learning Research}, 11:\penalty0 1201--1242,
  2010.

\bibitem[Xu et~al.(2019)Xu, Luo, Zha, and Duke]{xu2019gromov}
H.~Xu, D.~Luo, H.~Zha, and L.~C. Duke.
\newblock Gromov--{W}asserstein learning for graph matching and node embedding.
\newblock In \emph{Proceedings of the 36th International Conference on Machine
  Learning}, pages 6932--6941, 2019.

\bibitem[Yanardag and Vishwanathan(2015)]{yanardag2015deep}
P.~Yanardag and S.~Vishwanathan.
\newblock Deep graph kernels.
\newblock In \emph{Proceedings of the 21th ACM SIGKDD International Conference
  on Knowledge Discovery and Data Mining}, pages 1365--1374, 2015.

\end{thebibliography}

\clearpage
\appendix

\counterwithin{figure}{section}
\counterwithin{table}{section}

\section{Appendix}\label{sec:Appendix}

%

\subsection{Extended considerations on WWL definiteness}
\label{app:definiteness}

We will now discuss the positive definite nature of our WWL kernel. 

In general, whether distances obtained from optimal transport problems can be used to create positive definite kernels remains an open research question. Several attempts to draw general conclusions on the definiteness of the Wasserstein distance were unsuccessful, but insightful results on particular cases were obtained along the way. 
First, we collect some of these contributions and use them to prove that our WWL kernel for categorical embeddings is positive definite. 
Next, we elaborate further on the continuous embeddings case, for which we provide conjectures on practical conditions to obtain a positive definite kernel.

Before proceeding, let us reiterate some useful notions.
\begin{definition} \cite{scholkopf2002learning}
A symmetric function $k\colon \mathcal{X} \times \mathcal{X} \to \mathbb{R}$ is called a \textup{positive definite (pd) kernel} if it satisfies the condition
\begin{equation}
\label{eq:pdkernel}
\sum_{i,j=1}^{n}c_ic_jK_{ij} \geq 0, \, \, \textit{with} \, \, K_{ij} = k(x_i,x_j),
\end{equation}
for every $c_i\in\mathbb{R}$, $n\in\mathbb{N}$ and $x_{i}\in \mathcal{X}$.
\end{definition}

The matrix of kernel values $K$ with entries $K_{ij}$ is called the \emph{Gram matrix} of $k$ with respect to $x_1,\ldots,x_n$. A \textit{conditional} positive definite~(cpd) kernel is a function that satisfies Equation \ref{eq:pdkernel} for all $c_i \in \mathbb{R}$ with $\sum_{i=1}^n c_i = 0$. 
By analogy, a conditional negative definite~(cnd) kernel is a function that satisfies $\sum_{i,j=1}^{n}c_ic_jK_{ij} \leq 0$ for all $c_i \in \mathbb{R}$ with $\sum_{i=1}^n c_i = 0$.

For Euclidean spaces, obtaining kernels from distance functions is a well-studied topic.

\begin{proposition}
\citep{haasdonk2004learning}
\label{prop:haasdonk}
Let $d(x,x')$ be a symmetric, non-negative distance function with $d(x,x)=0$. If $d$ is isometric to an $L^2$-norm, then 
\begin{equation}
  k_d^{\mathrm{nd}}(x,x') = -d(x,x')^\beta, \;\;\beta \in \left [ 0, 2 \right ]
\end{equation}
is a valid cpd kernel.
\end{proposition}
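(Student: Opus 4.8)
The plan is to prove the equivalent statement that $d^\beta$ is conditionally negative definite (cnd), since $k_d^{\mathrm{nd}}=-d^\beta$ is cpd if and only if $d^\beta$ is cnd. By hypothesis $d$ is isometric to an $L^2$-norm, so there is a map $\phi\colon\mathcal{X}\to\mathcal{H}$ into a Hilbert space with $d(x,x')=\|\phi(x)-\phi(x')\|_2$. Conditional negative definiteness is preserved under pullback — if $g$ is cnd on $\mathcal{H}\times\mathcal{H}$, then $(x,x')\mapsto g(\phi(x),\phi(x'))$ is cnd on $\mathcal{X}\times\mathcal{X}$, because the defining inequality only involves evaluations of $g$ at the image points. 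Hence it suffices to show that $(u,v)\mapsto\|u-v\|_2^{\beta}$ is cnd on $\mathcal{H}$ for every $\beta\in[0,2]$.

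I would first dispatch the exponent $\beta=2$ by direct expansion. Writing $\|u-v\|^2=\|u\|^2+\|v\|^2-2\langle u,v\rangle$ and summing against coefficients $c_1,\dots,c_n$ with $\sum_i c_i=0$, the two norm terms drop out (for instance $\sum_{i,j}c_ic_j\|u_i\|^2=(\sum_j c_j)(\sum_i c_i\|u_i\|^2)=0$), leaving
\begin{equation}
\sum_{i,j}c_ic_j\|u_i-u_j\|^2=-2\Big\|\sum_i c_i u_i\Big\|^2\le 0,
\end{equation}
so $\|u-v\|^2$ is cnd. The degenerate endpoint $\beta=0$ gives the constant kernel $1$, for which the double sum equals $(\sum_i c_i)^2=0$ under the zero-sum constraint and so is trivially cnd.

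The key step is the intermediate range $\beta\in(0,2)$, handled by fractional subordination. Setting $\alpha=\beta/2\in(0,1)$, I would use the identity, valid for $\lambda\ge 0$,
\begin{equation}
\lambda^{\alpha}=\frac{\alpha}{\Gamma(1-\alpha)}\int_0^\infty\frac{1-e^{-s\lambda}}{s^{1+\alpha}}\,\mathrm{d}s,
\end{equation}
applied pointwise with $\lambda=\|u-v\|^2$. By Schoenberg's theorem, since $\|u-v\|^2$ is cnd the kernel $e^{-s\|u-v\|^2}$ is positive definite for every $s>0$; consequently $1-e^{-s\|u-v\|^2}$ is cnd, because under $\sum_i c_i=0$ the constant part contributes zero and subtracting a pd kernel keeps the double sum nonpositive. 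Integrating this one-parameter family of cnd kernels against the nonnegative measure $\tfrac{\alpha}{\Gamma(1-\alpha)}s^{-1-\alpha}\,\mathrm{d}s$ preserves conditional negative definiteness, whence $\|u-v\|^{2\alpha}=\|u-v\|^{\beta}$ is cnd. Pulling back along $\phi$ yields that $d^\beta$ is cnd, and therefore $k_d^{\mathrm{nd}}=-d^\beta$ is cpd.

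I expect the fractional-power step to be the main obstacle, as it is the only place that leaves elementary linear algebra behind: it requires Schoenberg's characterisation (a kernel is cnd iff $e^{-t\,\cdot}$ is pd for all $t>0$) together with the subordination integral, and one must justify interchanging the integral with the finite double sum and control integrability of the integrand near $s=0$ and $s=\infty$. The factor $1-e^{-s\lambda}\sim s\lambda$ near the origin against the weight $s^{-1-\alpha}$ makes the integral converge precisely for $\alpha\in(0,1)$, which is exactly what pins down the admissible range $\beta\in(0,2)$. The expansion for $\beta=2$, the endpoint $\beta=0$, and the pullback invariance are all routine by comparison.
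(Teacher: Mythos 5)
Your proposal is correct, but note that the paper itself never proves this proposition: it is imported verbatim, with a citation, from \citet{haasdonk2004learning}, and serves only as background for the appendix discussion of definiteness. So any complete argument is necessarily ``different from the paper's''; what you have written is essentially the classical Schoenberg-type proof that underlies the cited result. Your chain of reductions is sound: the equivalence ``$-d^\beta$ cpd $\iff$ $d^\beta$ cnd'' is immediate from the paper's definitions; pullback along the isometry $\phi$ preserves conditional negative definiteness because the defining inequality only ever evaluates the kernel at image points; the $\beta=2$ case follows from the polarization expansion giving $\sum_{i,j}c_ic_j\|u_i-u_j\|^2=-2\|\sum_i c_iu_i\|^2$; and the fractional range is handled by the subordination identity, whose normalising constant $\alpha/\Gamma(1-\alpha)$ you state correctly (it follows from integration by parts of $\int_0^\infty(1-e^{-t})t^{-1-\alpha}\,\mathrm{d}t$). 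The convergence analysis pinning $\alpha\in(0,1)$ (integrand $\sim s^{-\alpha}$ near $0$, $\sim s^{-1-\alpha}$ near $\infty$) is exactly the right justification for why the admissible exponents are $\beta\in(0,2)$, and interchanging the finite double sum with the integral is harmless by linearity once each integral is finite. The only cosmetic caveat is the endpoint $\beta=0$, where one must adopt the convention $d(x,x)^0=0^0=1$ so that the kernel is the constant function; under the zero-sum constraint this contributes $(\sum_i c_i)^2=0$ as you say. What your self-contained derivation buys, relative to the paper's bare citation, is an explicit record of which structural facts are actually used --- Schoenberg's equivalence between cnd kernels and positive definiteness of $e^{-t\,\cdot}$, plus the L\'evy-type integral representation of fractional powers --- which is precisely the machinery the paper later leans on implicitly when discussing Propositions~\ref{prop:feragen2} and~\ref{prop:feragen} and the definiteness of the categorical WWL kernel.
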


However, the Wasserstein distance in its general form is not isometric to an $L^2$-norm, as the metric space it induces strongly depends on the chosen ground distance \citep{figalli2011optimal}.
Recently, \citet{feragen2015geodesic} argued that many types of data, including probability distributions, do not always reside in Euclidean spaces. Therefore, they define the family of exponential kernels relying on a non-Euclidean distance $d$ as 
\begin{equation}
\label{eq:geodesicexpkernels}
    k(x,x') = e^{-\lambda d(x,x')^q} \quad \text{for} \quad \lambda, q > 0,
\end{equation}
and, based on earlier considerations from \citet{berg1984harmonic}, show that, under certain conditions, the Laplacian kernel ($q=1$ in Equation~\ref{eq:geodesicexpkernels}) is positive definite.

\begin{proposition}
\citep{feragen2015geodesic}
\label{prop:feragen2}
The geodesic Laplacian kernel is positive definite for all $\lambda > 0$ if and only if the geodesic distance $d$ is conditional negative definite.
\end{proposition}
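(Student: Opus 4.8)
The plan is to recognise Proposition~\ref{prop:feragen2} as an instance of Schoenberg's classical theorem (compare the treatment via negative definite functions in \citet{berg1984harmonic}) and to prove both implications directly from the definitions of positive and conditionally negative definiteness given above. Throughout I only use that $d$ is symmetric with $d(x,x) = 0$, which holds for any geodesic distance; geodesy itself plays no further algebraic role.

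For the \emph{if} direction (assuming $d$ is cnd and proving $k(x,x') = e^{-\lambda d(x,x')}$ is pd for every $\lambda > 0$), I would first establish the standard linking lemma between cnd and pd kernels. Fix an arbitrary base point $x_0$ and define
\[
\phi(x,x') := \tfrac{1}{2}\bigl(d(x,x_0) + d(x',x_0) - d(x,x')\bigr).
\]
A short computation shows $\phi$ is pd precisely because $d$ is cnd: given arbitrary coefficients $c_1,\dots,c_n$, one augments them with $c_0 := -\sum_{i} c_i$ so the augmented family sums to zero, and the identity $\sum_{i,j} c_i c_j \phi(x_i,x_j) = -\tfrac{1}{2}\sum_{i,j=0}^{n} c_i c_j d(x_i,x_j)$ turns the cnd inequality for the points $x_0,\dots,x_n$ into $\sum_{i,j} c_i c_j \phi(x_i,x_j) \ge 0$. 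Rearranging the definition of $\phi$ gives the factorisation $d(x,x') = d(x,x_0) + d(x',x_0) - 2\phi(x,x')$, hence
\[
e^{-\lambda d(x,x')} = g(x)\,g(x')\,e^{2\lambda \phi(x,x')}, \qquad g(x) := e^{-\lambda d(x,x_0)}.
\]
I would then close the argument with two stability facts: the exponential $e^{2\lambda\phi}$ of a pd kernel is pd (expand as a power series and use that Hadamard products and nonnegative sums of pd kernels are pd), and multiplication by the rank-one factor $g(x)g(x')$ preserves positive definiteness. By the Schur product theorem the product of these two pd kernels is pd, so $k$ is pd.

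For the \emph{only if} direction (assuming $e^{-\lambda d}$ is pd for all $\lambda > 0$ and deducing that $d$ is cnd), I would use a derivative-at-zero argument. Fix points $x_1,\dots,x_n$ and coefficients $c_i$ with $\sum_i c_i = 0$, and set $F(\lambda) := \sum_{i,j} c_i c_j e^{-\lambda d(x_i,x_j)}$. By hypothesis $F(\lambda) \ge 0$ for all $\lambda > 0$, while the zero-sum condition forces $F(0) = \bigl(\sum_i c_i\bigr)^2 = 0$. Hence the right derivative at the origin must satisfy $F'(0^+) \ge 0$; computing it termwise yields $F'(0^+) = -\sum_{i,j} c_i c_j d(x_i,x_j)$, which is exactly the cnd inequality $\sum_{i,j} c_i c_j d(x_i,x_j) \le 0$.

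I expect the main obstacle to lie in the \emph{if} direction, specifically in cleanly establishing the two supporting facts: the linking lemma equating cnd-ness of $d$ with pd-ness of the base-point kernel $\phi$, and the stability of positive definiteness under the Hadamard product and the exponential map. These are standard but must be stated carefully, since the whole reduction hinges on writing $e^{-\lambda d}$ as the product of a genuinely pd factor $e^{2\lambda\phi}$ and a harmless rank-one factor. By contrast, the \emph{only if} direction is essentially immediate once the nonnegativity of $F$ and the vanishing $F(0)=0$ are combined.
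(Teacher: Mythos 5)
Your proof is correct, but it is worth noting that the paper itself contains no proof of Proposition~\ref{prop:feragen2}: the statement is imported verbatim from \citet{feragen2015geodesic}, who in turn derive it from the classical theory of negative definite functions in \citet{berg1984harmonic} (essentially Schoenberg's theorem). What you have done is reconstruct that underlying classical argument in full, and both directions check out. In the \emph{if} direction, your base-point kernel $\phi$ is exactly the Berg--Christensen--Ressel linking construction (their lemma states that $d$ is cnd iff $d(x,x_0)+d(x',x_0)-d(x,x')-d(x_0,x_0)$ is pd, and your computation with the augmented coefficient $c_0=-\sum_i c_i$ is the standard proof of it, valid here since $d(x_0,x_0)=0$); the factorisation $e^{-\lambda d}=g\otimes g\cdot e^{2\lambda\phi}$ together with the power-series/Schur-product stability facts then gives positive definiteness. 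In the \emph{only if} direction, your derivative-at-zero argument is equivalent to the more commonly stated one (that $d=\lim_{\lambda\to 0^+}(1-e^{-\lambda d})/\lambda$ is a pointwise limit of cnd functions), and is airtight because $F$ is a finite sum of exponentials, hence differentiable, with $F(0)=\bigl(\sum_i c_i\bigr)^2=0$ forced by the zero-sum constraint. Your observation that the geodesic structure plays no algebraic role is also accurate: geodesy matters in the companion result for the Gaussian kernel (Proposition~\ref{prop:feragen}, where flatness in the sense of Alexandrov enters), not here. So relative to the paper, your contribution is a self-contained elementary proof of a result the authors delegate to the literature; it buys transparency about exactly which hypotheses ($d$ symmetric, $d(x,x)=0$) are used, at the cost of rehearsing standard material.
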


Once again, considerations on the negative definiteness of Wasserstein distance functions cannot be made on the general level. Certain ground distances, however, \emph{guarantee} the negative definiteness of the resulting Wasserstein distance. In particular, the Wasserstein distance with the discrete metric (i.e.,\ $\rho$ in Equation \ref{eq:hammingdist}) as the ground distance was proved to be conditional negative definite \citep{gardner2017definiteness}.

We will now leverage these results to prove that the Wasserstein distance equipped with the Hamming ground distance is conditional negative definite; therefore, it yields positive definite kernels for the categorical WL embeddings.

\subsubsection{The case of categorical embeddings}
\label{app:discretewwl}
When generating node embeddings using the Weisfeiler--Lehman labelling scheme with a shared dictionary across all the graphs, the solutions to the optimal transport problem are also shared across iterations.
We denote the Weisfeiler--Lehman embedding scheme as defined in Definition \ref{def:dwl_embedding} as $f^H_{\mathrm{\scriptscriptstyle{WL}}}$, and let $D^{f_{\mathrm{\scriptscriptstyle{WL}}}}_W$ be the corresponding GWD on a set of graphs $\mathcal{G}$ with categorical labels. Let $d_{\mathrm{Ham}}(v,v')$ of Equation~\ref{eq:hammingdist} be the ground distance of $D^{f_{\mathrm{\scriptscriptstyle{WL}}}}_W$. Then, the following useful results hold. 

\begin{lemma}
\label{lemma:discrete_distance}
If a transportation plan $\gamma$ with transport matrix $P$ is optimal in the sense of Definition~\ref{def:wasserstein} for distances $d_{\mathrm{Ham}}$ between embeddings obtained with $f^{H}_{\scriptscriptstyle{WL}}$, then it is also optimal for the discrete distances $d_{\mathrm{disc}}$ between the $H$-th iteration values obtained with the Weisfeiler--Lehman procedure.
\end{lemma}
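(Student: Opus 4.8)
The plan is to exploit the fact that the Weisfeiler--Lehman labelling uses \emph{perfect} hashing, which forces a nested (refinement) structure on the labels across iterations, and to combine this with a standard rerouting argument for optimal transport. First I would record the crucial consequence of perfect hashing: since $\ell^{h+1}(v) = \hash(\ell^{h}(v), \mathcal{N}^h(v))$, the equality $\ell^{h+1}(v)=\ell^{h+1}(u)$ forces $\ell^{h}(v)=\ell^{h}(u)$. Iterating downwards, $\ell^{H}(v)=\ell^{H}(u)$ implies $\ell^{h}(v)=\ell^{h}(u)$ for all $h\le H$, so any two nodes sharing their $H$-th iteration label (with the shared dictionary) have \emph{identical} WL embeddings. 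Two facts follow: (i) $d_{\mathrm{Ham}}$ and the discrete distance $d_{\mathrm{disc}}$ on the $H$-th labels have the same zero set, namely pairs with equal $\ell^{H}$; and (ii) since agreement up to a given level is transitive, if $x,y$ agree up to level $a$ and $y,z$ up to level $b$ then $x,z$ agree up to level $\min(a,b)$, which makes $d_{\mathrm{Ham}}$ an \emph{ultrametric} on the set of embeddings.

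Next I would rewrite the two problems so that optimality for $d_{\mathrm{disc}}$ becomes a simple mass condition. Writing $Z$ for the set of index pairs with matching $H$-th labels and $m_Z(P)=\sum_{(i,j)\in Z}P_{ij}$, the discrete cost of any plan is $\langle P, d_{\mathrm{disc}}\rangle = 1 - m_Z(P)$, because every off-$Z$ pair has ground cost $1$ and the total transported mass is $1$. Hence a plan is $d_{\mathrm{disc}}$-optimal if and only if it maximises $m_Z$, the maximum being $M^{*}=\sum_{c}\min(\sigma_c,\mu_c)$ over the $H$-th label classes $c$, where $\sigma_c,\mu_c$ denote the source and target mass carried by class $c$. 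It therefore suffices to prove that \emph{every} $d_{\mathrm{Ham}}$-optimal plan attains $m_Z = M^{*}$.

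The core step is a $2\times 2$ rerouting argument. Suppose a feasible plan $P$ has $m_Z(P)<M^{*}$; then some class $c$ carries unmatched mass on both sides, yielding a source $i$ and target $j$ with $\ell^{H}=c$ and a source $i'$ and target $j'$ with $\ell^{H}\neq c$ such that $P_{i,j'}>0$ and $P_{i',j}>0$. Rerouting $\epsilon>0$ of mass to create $P_{i,j}$ and $P_{i',j'}$ preserves the marginals and changes the $d_{\mathrm{Ham}}$-cost by $\epsilon\bigl(d_{\mathrm{Ham}}(i',j') - d_{\mathrm{Ham}}(i,j') - d_{\mathrm{Ham}}(i',j)\bigr)$, using $d_{\mathrm{Ham}}(i,j)=0$. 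Since all nodes of class $c$ share one embedding $e_c$, this equals $\epsilon\bigl(d_{\mathrm{Ham}}(i',j') - d_{\mathrm{Ham}}(e_c,j') - d_{\mathrm{Ham}}(i',e_c)\bigr)$, and the ultrametric inequality gives $d_{\mathrm{Ham}}(i',j')\le\max\bigl(d_{\mathrm{Ham}}(i',e_c),d_{\mathrm{Ham}}(e_c,j')\bigr)$. As $i'$ and $j'$ lie outside $c$, both $d_{\mathrm{Ham}}(i',e_c)$ and $d_{\mathrm{Ham}}(e_c,j')$ are strictly positive, so this maximum is strictly smaller than their sum and the change is strictly negative. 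Thus $P$ cannot be $d_{\mathrm{Ham}}$-optimal.

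Taking the contrapositive yields that every $d_{\mathrm{Ham}}$-optimal plan satisfies $m_Z = M^{*}$ and is therefore $d_{\mathrm{disc}}$-optimal, which is the claim. I expect the main obstacle to be precisely the \emph{strictness} in the last step: because the lemma quantifies over \emph{all} optimal plans, a merely non-strict triangle inequality (all a general metric would supply) would only yield the existence of some $Z$-maximal optimal plan, not that every optimal plan is such. The ultrametric property inherited from perfect hashing is exactly what upgrades the inequality to a strict one. The remaining points needing care are verifying that the nesting genuinely holds under the shared global dictionary and that the rerouting indices $i,j,i',j'$ always exist whenever $m_Z<M^{*}$.
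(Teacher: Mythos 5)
Your proof is correct, and it takes a genuinely different route from the paper's own argument --- one that is, in fact, more watertight. Both proofs share the same skeleton: by the nesting of WL labels under perfect hashing, the zero sets of $d_{\mathrm{Ham}}$ and $d_{\mathrm{disc}}$ coincide, and since $d_{\mathrm{disc}}$ takes values in $\{0,1\}$, discrete optimality is equivalent to maximising the transported mass on zero-distance pairs. From there the paper argues globally by contradiction: assuming the Hamming-optimal plan $P^{H}$ is not discrete-optimal, it invokes a plan $P^{*}$ carrying strictly more mass on the zero set and then asserts that $P^{*}$ must also have strictly smaller \emph{Hamming} cost, contradicting optimality. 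That transfer step is not justified as written: having more mass at zero cost does not by itself force a lower Hamming cost, because the off-zero entries of $D_{\mathrm{Ham}}$ are not constant (they range over $\{1/(H+1),\dots,1\}$), so where the remaining mass sits matters; one can exhibit pairs of feasible plans where the plan with more zero-mass is Hamming-worse. Your local $2\times 2$ rerouting argument supplies exactly the missing ingredient: instead of comparing $P^{H}$ against $P^{*}$, you improve $P^{H}$ itself whenever its matched mass falls short of $M^{*}$, and the ultrametric structure of WL embeddings (both cross terms strictly positive, hence their maximum strictly below their sum) makes the improvement strict. That strictness is what yields the conclusion for \emph{every} Hamming-optimal plan --- precisely the quantifier the lemma requires, as you correctly point out. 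In short: same reduction (discrete optimality $\Leftrightarrow$ maximal mass on the zero set), but your exchange argument closes a gap that the paper's transfer step leaves open, at the modest price of introducing and verifying the ultrametric property explicitly.
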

\textit{Proof.} See Appendix \ref{proof:discrete_distance}.

\begin{lemma}
\label{lemma:besttransplan}
  If a transportation plan $\gamma$ with transport matrix $P$ is optimal in the sense of Definition~\ref{def:wasserstein} for distances $d_{\mathrm{Ham}}$ between embeddings obtained with $f^{H}_{\mathrm{\scriptscriptstyle{WL}}}$, then it is also optimal for distances $d_{\mathrm{Ham}}$ between embeddings obtained with $f^{H-1}_{\mathrm{\scriptscriptstyle{WL}}}$.
\end{lemma}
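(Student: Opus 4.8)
The plan is to reduce the statement to a claim about a single transport plan on a fixed polytope, using the fact that the Hamming ground distance decomposes \emph{linearly} across Weisfeiler--Lehman iterations. Write $M^{h}$ for the matrix of pairwise Hamming distances $d_{\mathrm{Ham}}$ between the embeddings produced by $f^{h}_{\mathrm{\scriptscriptstyle{WL}}}$, and write $M_{\mathrm{disc}}^{i}$ for the matrix of discrete distances $\rho(\ell^{i}(\cdot),\ell^{i}(\cdot))$ that compares \emph{only} the $i$-th iteration labels. The structural input is the monotonicity of label agreement forced by perfect hashing: since $\ell^{i+1}(v)=\hash(\ell^{i}(v),\mathcal{N}^{i}(v))$ is injective, $\ell^{i+1}(u)=\ell^{i+1}(v)$ implies $\ell^{i}(u)=\ell^{i}(v)$, so for every pair the sequence $\rho(\ell^{0}(u),\ell^{0}(v)),\ldots,\rho(\ell^{H}(u),\ell^{H}(v))$ is non-decreasing in $\{0,1\}$. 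Summing Equation~\ref{eq:hammingdist} then gives the pointwise identities $(H+1)\,M^{H}=\sum_{i=0}^{H}M_{\mathrm{disc}}^{i}$ and $H\,M^{H-1}=\sum_{i=0}^{H-1}M_{\mathrm{disc}}^{i}$, where $M_{\mathrm{disc}}^{H}$ is exactly the discrete distance of Lemma~\ref{lemma:discrete_distance}. Because the feasible set $\Gamma$ of transport matrices depends only on the two marginals, it is common to all iterations, so I may compare all these objectives over one polytope.

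With this decomposition, Lemma~\ref{lemma:besttransplan} reduces to showing that the plan $P^{*}$ which is optimal for $M^{H}$ also minimises $\sum_{i=0}^{H-1}\langle\cdot,M_{\mathrm{disc}}^{i}\rangle=H\langle\cdot,M^{H-1}\rangle$, for which it suffices that $P^{*}$ minimises each single-level objective $\langle\cdot,M_{\mathrm{disc}}^{i}\rangle$ individually. Indeed, if $\langle P^{*},M_{\mathrm{disc}}^{i}\rangle=W_{\mathrm{disc}}^{i}$ for every $i\le H-1$, then $P^{*}$ minimises each summand of $M^{H-1}=\tfrac1H\sum_{i=0}^{H-1}M_{\mathrm{disc}}^{i}$, hence minimises their average, i.e.\ $\langle P^{*},M^{H-1}\rangle=W^{H-1}$. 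Lemma~\ref{lemma:discrete_distance} already supplies the top level $\langle P^{*},M_{\mathrm{disc}}^{H}\rangle=W_{\mathrm{disc}}^{H}$, and taking the Frobenius product of the decomposition with any plan yields only the superadditive bound $(H+1)W^{H}\ge\sum_{i=0}^{H}W_{\mathrm{disc}}^{i}$; the whole statement is therefore \emph{equivalent} to attaining equality, i.e.\ to $P^{*}$ realising every single-level minimum at once.

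Establishing this simultaneous optimality is the heart of the argument, and the step I expect to be the main obstacle: a naive inequality chase from Lemma~\ref{lemma:discrete_distance} leaves the sign of $\langle P^{*},M^{H-1}\rangle-W^{H-1}$ undetermined, because minimising $M_{\mathrm{disc}}^{H}$ pins down the mass matched between \emph{identical} nodes but not how the residual mass is routed. The extra ingredient is the \emph{laminar} (refinement) structure of the WL labels: by the monotonicity above, the label sequences order node pairs by the length of their longest common prefix, and $d_{\mathrm{Ham}}$ is a strictly decreasing function of that length, so an $M^{H}$-optimal plan is forced to route residual mass between nodes sharing the \emph{longest} possible prefix, which is precisely what maximises the matched mass inside every coarser label class simultaneously. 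I would make this rigorous with an exchange argument: if $P^{*}$ did not maximise the mass matched within the level-$i$ classes for some $i<H$, a mass-preserving swap re-pairing two routed units so as to lengthen their common prefix would strictly lower $\langle P^{*},M^{H}\rangle$, contradicting optimality. This yields $\langle P^{*},M_{\mathrm{disc}}^{i}\rangle=W_{\mathrm{disc}}^{i}$ for all $i\le H$ and closes the proof; it is the same mechanism that underlies Lemma~\ref{lemma:discrete_distance}, now applied at every iteration rather than only the last. A cleaner alternative that avoids the explicit exchange is to build the common optimal plan directly by greedy hierarchical matching over the label-refinement tree and verify it attains $W^{H}$, $W_{\mathrm{disc}}^{H}$, and $W^{H-1}$ at once; either route reduces to the single combinatorial fact that finer prefix matching is never worse at coarser levels.
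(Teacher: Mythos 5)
Your proposal is correct, and it starts from the same place as the paper's proof: the linear decomposition of the Hamming cost across WL iterations (your $(H+1)M^{H}=\sum_{i=0}^{H}M_{\mathrm{disc}}^{i}$; the paper writes the equivalent recursion $\langle P^{h},D_{\mathrm{Ham}}^{h}\rangle=\tfrac1h\left((h-1)\langle P^{h},D_{\mathrm{Ham}}^{h-1}\rangle+\langle P^{h},D_{\mathrm{disc}}^{h}\rangle\right)$, with a harmless off-by-one in the normalisation that your version fixes), taken over the common feasible polytope $\Gamma$. Where you genuinely depart from the paper is at the step you yourself flag as the crux. The paper closes the argument by invoking Lemma~\ref{lemma:discrete_distance} together with the remark that the entries of $D_{\mathrm{Ham}}^{h}$ ``increase in a step-wise fashion'' and that ``their ordering remains constant''; as written, this is precisely the inference you observe to be invalid in general --- a plan that minimises a sum and one of its summands need not minimise the other summand --- so the paper asserts the decisive implication rather than proving it. You fill that hole explicitly with the laminar structure of the WL refinement: any plan optimal for $M^{H}$ must simultaneously maximise, for every $i\le H$, the mass matched within level-$i$ label classes, which you establish either by an exchange argument or by exhibiting a greedy hierarchical plan that is optimal at all levels at once. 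Both routes close the proof. To make the exchange fully rigorous, the one fact you should write out is the prefix (ultrametric) inequality $c(u',v)\ge\min\left(c(u',v'),c(u,v)\right)$ for common-prefix lengths $c$: it guarantees that re-pairing $(u,v),(u',v')\mapsto(u,v'),(u',v)$, with $u,v'$ in a common level-$i$ class and $v,u'$ outside it, gains at least one unit of total prefix agreement, hence strictly lowers $\langle\cdot,M^{H}\rangle$. The greedy route instead rests on the identity $\min(\sum_j a_j,\sum_j b_j)=\sum_j\min(a_j,b_j)+\min\left(\sum_j(a_j-b_j)^{+},\sum_j(b_j-a_j)^{+}\right)$ applied recursively down the refinement tree. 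The trade-off: the paper's argument is shorter but leans on an unproved monotonicity claim, whereas yours proves a stronger statement (level-wise optimality at every $i\le H$, not just optimality at $H-1$) and identifies the actual structural reason the lemma holds.
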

\textit{Proof.} See Appendix \ref{proof:besttransplan}.

Therefore, we postulate that the Wasserstein distance between categorical WL node embeddings is a conditional negative definite function.
\begin{theorem} 
\label{th:neg}
$D^{f_{\mathrm{\scriptscriptstyle{WL}}}}_W(\cdot,\cdot)$ is a conditional negative definite function.
\end{theorem}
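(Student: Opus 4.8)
The plan is to exploit the two preceding lemmas to decompose the Wasserstein distance built on the Hamming ground distance into a non-negative combination of Wasserstein distances built on the discrete ground distance, and then to invoke the known conditional negative definiteness of the latter together with the fact that conditional negative definite functions are closed under non-negative linear combinations.

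First I would observe that the Hamming distance of Equation~\ref{eq:hammingdist} between two WL embeddings is, by construction, the average $d_{\mathrm{Ham}} = \frac{1}{H+1}\sum_{h=0}^{H}\rho(\ell^h(v),\ell^h(v'))$ of the discrete metric $\rho$ applied to the label at each of the $H+1$ iterations. Consequently the Hamming cost matrix splits as $M_{\mathrm{Ham}} = \frac{1}{H+1}\sum_{h=0}^{H} M^h_{\mathrm{disc}}$, where $M^h_{\mathrm{disc}}$ records the discrete distances between the labels of iteration $h$.

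The crucial step is to commute the optimal transport plan through this sum. Let $P$ be a transport plan optimal for $d_{\mathrm{Ham}}$ on the full embeddings $f^H_{\mathrm{\scriptscriptstyle{WL}}}$. Applying Lemma~\ref{lemma:besttransplan} repeatedly shows that $P$ is simultaneously optimal for $d_{\mathrm{Ham}}$ on $f^h_{\mathrm{\scriptscriptstyle{WL}}}$ for every $h \leq H$, and applying Lemma~\ref{lemma:discrete_distance} at each such level then shows that $P$ is also optimal for the discrete distances $M^h_{\mathrm{disc}}$ on the iteration-$h$ values. Because one and the same $P$ attains the minimum for each summand, the Frobenius inner product distributes exactly, giving
\begin{equation}
D^{f_{\mathrm{\scriptscriptstyle{WL}}}}_W(G,G') = \langle P, M_{\mathrm{Ham}}\rangle = \frac{1}{H+1}\sum_{h=0}^{H}\langle P, M^h_{\mathrm{disc}}\rangle = \frac{1}{H+1}\sum_{h=0}^{H} W_1^{\mathrm{disc},h}(G,G'),
\end{equation}
where $W_1^{\mathrm{disc},h}$ denotes the Wasserstein distance computed from the discrete ground metric on the labels of iteration $h$.

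Finally I would close the argument by citing \citet{gardner2017definiteness}, which guarantees that each $W_1^{\mathrm{disc},h}$ is conditional negative definite, and then noting that conditional negative definiteness is preserved under non-negative linear combinations: for any coefficients $c_i$ with $\sum_i c_i = 0$, the associated quadratic form of a sum equals the sum of the quadratic forms, each of which is $\leq 0$. This immediately yields that $D^{f_{\mathrm{\scriptscriptstyle{WL}}}}_W$ is conditional negative definite. The main obstacle — and precisely the reason the two lemmas are required — is justifying the interchange in the middle step: in general the optimal plan for a sum of cost matrices differs from the optimal plans for the individual summands, so the Frobenius product does not distribute over optimal transport distances. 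The shared-dictionary structure of the WL scheme, encoded in Lemmas~\ref{lemma:discrete_distance} and~\ref{lemma:besttransplan}, is exactly what forces a single plan to be jointly optimal across all iterations and thereby makes the decomposition valid.
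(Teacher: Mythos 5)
Your proof is correct and follows essentially the same route as the paper's: you decompose the Hamming cost matrix into the (normalised) sum of per-iteration discrete-metric cost matrices, use Lemmas~\ref{lemma:discrete_distance} and~\ref{lemma:besttransplan} to show that a single transport plan is jointly optimal for every summand so that the minimum distributes over the sum, and conclude via the conditional negative definiteness of each discrete-metric Wasserstein distance \citep{gardner2017definiteness} together with closure of cnd functions under non-negative combinations. Your closing remark that the two lemmas exist precisely to justify interchanging the minimisation with the sum is exactly the role they play in the paper's proof.
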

\textit{Proof.} See Appendix \ref{proof:theorem}.

\textbf{Proof of Theorem \ref{th:pd}.}
Theorem \ref{th:neg} in light of Proposition~\ref{prop:feragen2} implies that the WWL kernel of Definition~\ref{def:wwl} is positive definite for all $\lambda > 0$. \hfill $\square$

We will now consider the case of the definiteness of kernels in the continuous setting. 

\subsubsection{The case of continuous embeddings}\label{sec:Continuous embeddings}

On one hand, in the categorical case, we proved the positive definiteness of our kernel. On the other hand, the continuous case is considerably harder to tackle.
We conjecture that, under certain conditions, the same might hold for continuous features. 
Although we do not have a formal proof yet, in what follows, we discuss arguments to support this conjecture, which seems to agree with our empirical findings.\footnote{We observe that for all considered data sets, after standardisation of the input features before the embedding scheme, GWD matrices are conditional negative definite.}

The \emph{curvature} of the metric space induced by the Wasserstein metric for a given ground distance plays an important role. First, we need to define \emph{Alexandrov spaces}.
\begin{definition}[Alexandrov space]
    Given a metric space and a real number $k$, the space is called an Alexandrov space if its sectional curvature is $\geq k$.
\end{definition}
Roughly speaking, the curvature indicates to what extent a geodesic triangle will be deformed in the space. The case of $k = 0$ is special as no distortion is happening here---hence, spaces that satisfy this property are called \emph{flat}. The concept of Alexandrov spaces is required in the following proposition, taken from a theorem by \citet{feragen2015geodesic}, which shows the relationship between a kernel and its underlying metric space.
\begin{proposition}
\label{prop:feragen}
The geodesic Gaussian kernel (i.e., $q=2$ in Equation \ref{eq:geodesicexpkernels}) is positive definite for all $\lambda > 0$ if and only if the underlying metric space $(X, d)$ is flat in the sense of Alexandrov, i.e.,\ if any geodesic triangle in $X$ can be isometrically embedded in a Euclidean space.
\end{proposition}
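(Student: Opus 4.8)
The plan is to reduce the positive-definiteness statement to a purely metric property of $d^2$ using classical harmonic-analysis results, and then to translate that property into the geometric language of Alexandrov curvature. First I would invoke Schoenberg's criterion (see \citet{berg1984harmonic}): for a symmetric $d$ with $d(x,x)=0$, the Gaussian kernel $k(x,x') = e^{-\lambda d(x,x')^2}$ is positive definite for \emph{all} $\lambda > 0$ if and only if $d^2$ is conditional negative definite. This collapses the ``for all $\lambda$'' quantifier onto a single property of the squared distance, exactly mirroring the role that Proposition~\ref{prop:feragen2} plays for the Laplacian kernel ($q=1$ in Equation~\ref{eq:geodesicexpkernels}).

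Second, I would chain this with the companion theorem of Schoenberg, again available in \citet{berg1984harmonic}: $d^2$ is conditional negative definite if and only if $(X,d)$ admits an isometric embedding $\phi\colon X \to \mathcal{H}$ into some Hilbert space, i.e.\ $d(x,x') = \|\phi(x)-\phi(x')\|_{\mathcal{H}}$. Composing the two equivalences, the geodesic Gaussian kernel is positive definite for all $\lambda > 0$ precisely when $(X,d)$ embeds isometrically into a Hilbert (equivalently $L^2$) space. The entire analytic content of the proposition is settled at this point, and only the geometric reformulation remains.

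The third and most delicate step is to identify Hilbert-space embeddability with Alexandrov flatness for a geodesic space. The forward direction is short: an isometric embedding into $\mathcal{H}$ carries each geodesic to a straight segment, so every geodesic triangle is sent isometrically into a finite-dimensional affine---hence Euclidean---subspace, which is exactly the flatness condition. The converse, that Euclidean embeddability of \emph{all} geodesic triangles forces a single global embedding into a Hilbert space, is where the genuine work lies; here one invokes the characterisation of flat Alexandrov spaces (curvature simultaneously bounded below and above by $0$, so that all comparison triangles are congruent to Euclidean ones) together with a gluing argument that assembles the triangle-wise Euclidean structure into one consistent Hilbert-space chart.

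I expect this final step to be the main obstacle: the two Schoenberg reductions are standard and essentially bookkeeping, whereas propagating triangle-wise flatness to a global isometric embedding relies on the genuinely geometric machinery of Alexandrov geometry (comparison triangles, geodesic completeness, and gluing) rather than on any kernel-theoretic manipulation. Since the statement is quoted from \citet{feragen2015geodesic}, in the paper we would cite their result directly, but the chain above indicates how a self-contained proof would be organised.
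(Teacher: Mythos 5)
The paper gives no proof of this proposition: it is imported verbatim as a theorem of \citet{feragen2015geodesic} (the text introduces it as ``taken from a theorem by'' that work), exactly as you note at the end of your proposal, so a citation is all the paper supplies and all that is required. Your reconstruction --- Schoenberg's criterion ($e^{-\lambda d(x,x')^2}$ positive definite for all $\lambda>0$ iff $d^2$ is conditionally negative definite), Schoenberg's embedding theorem ($d^2$ conditionally negative definite iff $(X,d)$ embeds isometrically into a Hilbert space), and then the geometric identification of Hilbert-embeddability with Alexandrov flatness for geodesic spaces --- is precisely the strategy of the cited proof; the analytic reductions are correct, the forward geometric direction is argued correctly (an isometric embedding sends geodesics to straight segments, hence geodesic triangles to planar Euclidean triangles), and the converse (triangle-wise flatness forces a single global Hilbert embedding) is rightly identified as the genuinely geometric step, which both you and the paper ultimately defer to \citet{feragen2015geodesic} rather than prove.
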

However, it is unlikely that the space induced by the Wasserstein distance is locally flat, as not even the geodesics~(i.e.,\ a generalisation of the shortest path to arbitrary metric spaces) between graph embeddings are necessarily unique, as we subsequently show. Hence,
we use the \emph{geodesic Laplacian kernel} instead of the Gaussian one because it poses less strict requirements on the induced space, as stated in Proposition~\ref{prop:feragen2}. Specifically, the metric used in the kernel function needs to be cnd. We cannot directly prove this yet, but we can prove that the converse is not true. To this end, we first notice that the metric space induced by the GWD, which we refer to as $X$, does \emph{not} have a curvature that is bounded from above.

\begin{definition}
A metric space $(X,d)$ is said to be $\CAT(k)$ if its curvature is bounded by some real number $k>0$ from above. This can also be seen as a ``relaxed'' definition, or generalisation, of a Riemannian manifold.
\end{definition}

\begin{theorem}
$X$ is not in $\CAT(k)$ for any $k > 0$, meaning that its curvature is \emph{not} bounded by any $k > 0$ from above.
\end{theorem}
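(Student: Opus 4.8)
The plan is to rule out every positive curvature bound at once by contradicting the local uniqueness of geodesics that such a bound would force, using the non-uniqueness of geodesics between graph embeddings that are arbitrarily close in the GWD.

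First I would invoke the standard fact from metric geometry that if a geodesic space has curvature bounded above by some $k > 0$ (i.e.\ is $\CAT(k)$ in the sense of the definition above), then any two points at distance strictly less than $\pi/\sqrt{k}$ are joined by a \emph{unique} geodesic. Consequently, to prove that $X$ is not $\CAT(k)$ for any $k > 0$ it suffices to exhibit, for every $\epsilon > 0$, a pair of graph embeddings at GWD exactly $\epsilon$ that are connected by at least two distinct geodesics: given any candidate $k > 0$, choosing $\epsilon < \pi/\sqrt{k}$ then contradicts the uniqueness property, and since $k$ was arbitrary the claim follows.

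Next I would construct the witnessing embeddings explicitly from two-node graphs. Fix $\epsilon > 0$ and take the uniform empirical measures $\mu = \frac{1}{2}\delta_a + \frac{1}{2}\delta_b$ and $\nu = \frac{1}{2}\delta_c + \frac{1}{2}\delta_d$, realised as the embeddings of two $2$-node graphs whose node features $a,b,c,d$ form a square of side $\epsilon$, e.g.\ $a = (0,0)$, $b = (\epsilon,\epsilon)$, $c = (\epsilon,0)$, $d = (0,\epsilon)$. The two balanced matchings $\{a \to c,\, b \to d\}$ and $\{a \to d,\, b \to c\}$ then have equal transport cost, so both are optimal and $W_1(\mu,\nu) = \epsilon$. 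Turning each optimal matching into a displacement interpolation (moving the atoms along straight lines) yields two curves $t \mapsto \mu_t$, each of constant $W_1$-speed $\epsilon$ and total length $\epsilon = W_1(\mu,\nu)$, hence both are minimising geodesics whose interior points remain uniform two-atom measures, i.e.\ admissible $2$-node embeddings lying in $X$. Evaluating at $t = \tfrac{1}{2}$ gives the distinct midpoints $\frac{1}{2}\delta_{(\epsilon/2,\,0)} + \frac{1}{2}\delta_{(\epsilon/2,\,\epsilon)}$ and $\frac{1}{2}\delta_{(0,\,\epsilon/2)} + \frac{1}{2}\delta_{(\epsilon,\,\epsilon/2)}$, so the two geodesics are genuinely different. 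Because $\epsilon$ is arbitrary, non-uniqueness occurs at every scale, and the uniqueness criterion excludes $\CAT(k)$ for all $k > 0$.

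The main obstacle is the careful verification that the two displacement interpolations are honest globally minimising geodesics of $(X, W_1)$: one must confirm that both matchings are \emph{simultaneously} optimal (the cost equality that the square configuration is designed to produce), that each interpolating curve actually attains length $W_1(\mu,\nu)$ rather than merely sharing its endpoints, and that every intermediate measure is itself a valid graph embedding so that the geodesics stay inside $X$. A secondary subtlety is reconciling the paper's \emph{local} notion of curvature bounded above with the uniqueness statement: since the entire configuration can be translated so that its $O(\epsilon)$-diameter support sits inside an arbitrarily small neighbourhood of any fixed embedding $p \in X$, the two distinct geodesics lie within a single purported $\CAT(k)$ ball, so the contradiction applies even to the local formulation.
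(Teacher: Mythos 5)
Your proposal is correct and follows essentially the same route as the paper: the same square configuration of two $2$-node embeddings ($a=(0,0)$, $b=(\epsilon,\epsilon)$ versus $c=(\epsilon,0)$, $d=(0,\epsilon)$), the same pair of equal-cost optimal matchings, and the same contradiction with the unique-geodesic property of $\CAT(k)$ spaces at scales below the threshold, with $\epsilon$ arbitrary. Your write-up is in fact somewhat more careful than the paper's sketch, since you explicitly promote the two optimal plans to distinct constant-speed geodesics via displacement interpolation and verify their midpoints differ, a step the paper leaves implicit.
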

\begin{proof}
  This follows from a similar argument presented by \citet{Turner2014}. We briefly sketch the argument.
  Let $G$ and $G'$ be two graphs.
  Assume that $X$ is a $\CAT(k)$ space for some $k > 0$.
  Then, it follows~\citep[Proposition 2.11, p.\ 23]{bridson2013metric} that if 
  $D^{f_{\mathrm{WL}}}_{W}(G,G') < \pi^2 / k$, there is a \emph{unique} geodesic between them. However, we can construct a family of graph embeddings for which this is not the case. To this end, let $\epsilon > 0$ and $f_\mathrm{WL}(G)$ and $f_\mathrm{WL}(G')$ be two graph embeddings with node embeddings $a_1 = (0, 0)$, $ a_2 = ( \epsilon, \epsilon)$ as well as $b_1 = (0,\epsilon)$ and $b_2 = (\epsilon,0)$, respectively. Because we use the Euclidean distance as a ground distance, there will be two optimal transport plans: the first maps $a_1$ to $b_1$ and $a_2$ to $b_2$, whereas the second maps $a_1$ to $b_2$ and $a_2$ to $b_1$. Hence, we have found two geodesics that connect $G$ and $G'$. Because we may choose $\epsilon$ to be arbitrarily small, the space cannot be $\CAT(k)$ for $k > 0$.
\end{proof}
Although this does not provide an upper bound on the curvature, we have the following conjecture.
\begin{conjecture}
$X$ is an Alexandrov space with curvature bounded from below by zero.
\end{conjecture}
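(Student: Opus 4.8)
The plan is to establish the defining Alexandrov condition for a lower curvature bound of zero: that every geodesic triangle in $X$ is at least as ``fat'' as its Euclidean comparison triangle, i.e.\ for a geodesic triangle with vertices $X_1, X_2, X_3 \in X$ and any point $p$ on the side opposite $X_1$, one has $d_X(X_1,p) \geq \tilde{d}(\tilde{X}_1, \tilde{p})$, where the tilde denotes the corresponding quantities in the comparison triangle in $\mathbb{R}^2$ with identical side lengths. I would follow the strategy of \citet{Turner2014}, whose space of persistence diagrams is the direct structural analogue of $X$: both are spaces of finite point configurations equipped with an optimal-transport metric over a \emph{flat} ground space, here $(\mathbb{R}^{m(H+1)}, \|\cdot\|_2)$. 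The hope is that nonnegative curvature of $X$ is inherited from the vanishing curvature of this Euclidean ground space, exactly as nonnegative curvature of a Wasserstein space $\mathcal{P}_2(M)$ is inherited from a nonnegatively curved base $M$ in the results of Lott--Villani and Sturm.

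The first step is to characterise the geodesics of $X$ via \emph{displacement interpolation}: given an optimal coupling $P$ between the node embeddings of two graphs, one moves each unit of transported mass along the straight Euclidean segment joining its source and target point at constant speed, so that $t \mapsto (\pi_t)_\# P$ is a minimising geodesic. Although $X$ is not uniquely geodesic---the preceding theorem already exhibits branching---the mere existence of such geodesics suffices for the comparison. The second step is to lift a geodesic triangle to the ground space through its optimal couplings and build the flat comparison triangle with matching side lengths. Because displacement interpolation acts affinely on each transported particle, $d_X(X_1,p)$ can be written as an aggregate of particle-wise Euclidean distances, each obeying the flat (equality) comparison in $\mathbb{R}^{m(H+1)}$; combining these pointwise comparisons with the convexity of the squared ground distance should yield the required triangle inequality and hence curvature $\geq 0$.

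The hard part will be the dependence on the exponent $p$. The clean inheritance theorems of Lott--Villani and Sturm, and the comparison argument of \citet{Turner2014}, are all stated for the \emph{quadratic} cost $W_2$, where displacement interpolation produces genuine constant-speed geodesics and convexity of $d^2$ drives the comparison. The WWL kernel, however, uses $W_1$, whose geometry is considerably worse: $W_1$ geodesics branch heavily and the underlying $L^1$-type structure does not in general inherit nonnegative curvature from a flat base. The crux is therefore to carry the triangle-comparison argument through for the $p=1$ metric on the restricted subclass of finitely supported, uniformly weighted measures that actually arise as WL embeddings, exploiting their special combinatorial structure---or, alternatively, to show that on this subclass the relevant geodesics and comparison inequalities behave as in the $W_2$ theory. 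Controlling the interaction between the non-uniqueness of $W_1$ geodesics and the lower curvature bound is where I expect the genuine difficulty to lie, which is presumably why the statement is posed as a conjecture rather than a theorem.
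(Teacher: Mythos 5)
The paper does not actually prove this statement: it is left as a conjecture, accompanied only by a one-line proof idea deferring to \citet{Turner2014} (``characterizing the distance between triples of graph embeddings''), which is exactly the displacement-interpolation and triangle-comparison strategy you outline, so your plan follows the same route the paper envisions. Moreover, your diagnosis of the obstruction---that Turner's comparison argument and the Lott--Villani/Sturm inheritance results are native to the quadratic cost $W_2$, whereas the GWD here uses $W_1$, whose geodesics need not inherit nonnegative Alexandrov curvature from a flat ground space---is precisely the gap that keeps this a conjecture rather than a theorem, and is in fact a sharper account of the difficulty than the paper itself provides.
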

For a proof idea, we refer to \citet{Turner2014}; the main argument involves characterizing the distance between triples of graph embeddings. This conjecture is helpful insofar as being a nonnegatively curved Alexandrov space is a necessary prerequisite for $X$ to be a Hilbert space~\citep{Shin-Ichi2012}. In turn, \citet{feragen2015geodesic} shows that cnd metrics and Hilbert spaces are intricately linked. 
Thus, we have some hope in obtaining a cnd metric, even though we currently lack a proof.
Our empirical results, however, indicate that it is possible to turn the GWD into a cnd metric with proper normalisation. 
Intuitively, for high-dimensional input spaces, standardisation of input features changes the curvature of the induced space by making it locally (almost) flat.

To support this argumentation, we refer to an existing way to ensure positive definiteness. One can use an alternative to the classical Wasserstein distance denoted as the sliced Wasserstein \citep{rabin2011wasserstein}. The idea is to project high-dimensional distributions into one-dimensional spaces, hereby calculating the Wasserstein distance as a combination of one-dimensional representations. 
\citet{kolouri2016sliced} showed that each of the one-dimensional Wasserstein distances is conditional negative definite.
The kernel on high-dimensional representations is then defined as a combination of the one-dimensional positive definite counterparts.

\subsection{Proof of Lemma \ref{lemma:discrete_distance}}
\label{proof:discrete_distance}
\begin{proof}
We recall the matrix notation introduced in Equation \ref{eq:wassdiscreteformulation} of the main paper, where $M$ is the cost or distance matrix, $P \in \Gamma$ is a transport matrix (or joint probability), and $\langle \cdot, \cdot \rangle$ is the Frobenius dot product.
Because we give equal weight (i.e.,\ equal probability mass) to each of the vectors in each set, $\Gamma$ contains all nonnegative $n \times n'$ matrices $P$ with
$$
\sum_{i=1}^{n} p_{ij} = \frac{1}{n'} \;\;,\;\;
\sum_{j=1}^{n'} p_{ij} = \frac{1}{n} \;\;,\;\;
p_{ij} \geq 0 \;\; \forall i, j
$$
  For notation simplicity, let us denote by $D_{\mathrm{Ham}}^h$ the Hamming matrix $D_{\mathrm{Ham}}(f^h_{\mathrm{\scriptscriptstyle{WL}}}(G), f^h_{\mathrm{\scriptscriptstyle{WL}}}(G'))$, where the $ij$-th entry is given by the Hamming distance between the embedding of the $i$-th node of graph $G$ and the embedding of the $j$-th node of graph $G'$ at iteration $h$. 
Similarly, we define $D_{\mathrm{disc}}^h$ to be the discrete metric distance matrix, where the $ij$-th entry is given by the discrete distance between feature $h$ of node embedding $i$ of graph $G$ and feature $h$ of node embedding $j$ of graph $G'$.
  It is easy to see that $[D_{\mathrm{Ham}}^h]_{ij} \in [0,1]$ and $[D_{\mathrm{disc}}^h]_{ij} \in \{0,1\}$ and, by definition,
$$
D_{\mathrm{Ham}}^H = \frac{1}{H} \sum_{h=0}^H D_{\mathrm{disc}}^h.
$$

Moreover, because of the WL procedure, two labels that are different at iteration $h$ will also be different at iteration $h+1$. Hence, the following identity holds:
$$
\left [D_{\mathrm{Ham}}^h \right ]_{ij} \leq
\left [D_{\mathrm{disc}}^{h} \right ]_{ij},
$$
which implies that $[D_{\mathrm{Ham}}^h]_{ij} =0 \iff [D_{\mathrm{disc}}^h]_{ij} =0$. 
An optimal transportation plan $P^{h}$ for $f^{h}_{\mathrm{\scriptscriptstyle{WL}}}$ embeddings satisfies
$$
\left \langle P^{h}, D_{\mathrm{Ham}}^{h}\right \rangle \leq \left \langle P, D_{\mathrm{Ham}}^{h}\right \rangle \; \forall P \in \Gamma.
$$
Assuming that $P^h$ is not optimal for $D_d^h$, we can define $P^*$ such that
$$
\left \langle P^{*}, D_{\mathrm{disc}}^{h}\right \rangle < \left \langle P^h, D_{\mathrm{disc}}^{h}\right \rangle.
$$
Because the entries of $D_{\mathrm{disc}}^h$ are either $0$ or $1$, we can define the set of indices tuples $\mathcal{H} = \left \{(i,j) \; | \; [D_{\mathrm{disc}}^h]_{ij} =1 \right \}$ and rewrite the inequality as
$$
\sum_{i,j \in \mathcal{H}} p^*_{ij} < \sum_{i,j \in \mathcal{H}} p^h_{ij}.
$$
Considering the constraints on the entries of $P^*$ and $P^h$, namely $\sum_{i,j} p^*_{ij} = \sum_{i,j} p^h_{ij} = 1$, this implies that, by rearranging the transport map, there is more mass that could be transported at $0$ cost. In our formalism,
$$
\sum_{i,j \notin \mathcal{H}} p^*_{ij} > \sum_{i,j \notin \mathcal{H}} p^h_{ij}.
$$
However, as stated before, entries of $D_d^h$ that are $0$ are also $0$ in $D_\mathrm{Ham}^h$. Therefore, a better transport plan $P^*$ would also be optimal for $D_\mathrm{Ham}^h$:
$$
\left \langle P^{*}, D_{\mathrm{Ham}}^{h}\right \rangle < \left \langle P^h, D_{\mathrm{Ham}}^{h}\right \rangle,
$$
which contradicts the optimality assumption above. Hence, $P^h$ is also optimal for $D_\mathrm{disc}^H$.

\end{proof}

\subsection{Proof of Lemma \ref{lemma:besttransplan}}
\label{proof:besttransplan}
\begin{proof}
Intuitively, the transportation plan at iteration $h$ is a ``refinement'' of the transportation plan at iteration $h-1$, where only a subset of the optimal transportation plans remains optimal for the new cost matrix $D_H^h$.
Using the same notation as for the Proof in Appendix \ref{proof:discrete_distance},
and considering the WL procedure, two labels that are different at iteration $h$ will also be different at iteration $h+1$. Hence, the following identities hold:
$$
\left [D_{\mathrm{Ham}}^h \right ]_{ij} \leq
\left [D_{\mathrm{Ham}}^{h+1} \right ]_{ij}\;\; 
\left [D_{\mathrm{disc}}^h \right ]_{ij} \leq
\left [D_{\mathrm{disc}}^{h+1} \right ]_{ij}
$$
$$
\left [D_{\mathrm{Ham}}^h \right ]_{ij} \leq
\left [D_{\mathrm{disc}}^{h} \right ]_{ij}.
$$

An optimal transportation plan $P^{h}$ for $f^{h}_{\scriptscriptstyle{WL}}(G)$ embeddings satisfies
$$
  \left \langle P^{h}, D_{\mathrm{Ham}}^{h}\right \rangle \leq \left \langle P, D_{\mathrm{Ham}}^{h}\right \rangle \; \forall P \in \Gamma,
$$
which can also be written as
$$
\left \langle P^{h}, D_{\mathrm{Ham}}^{h}\right \rangle =
\frac{1}{h} \left ((h-1) \cdot \left \langle P^{h}, D_{\mathrm{Ham}}^{h-1} \right \rangle\\
+ \left \langle P^{h}, D_{\mathrm{disc}}^{h} \right \rangle \right ).
$$
  The values of $D_\mathrm{Ham}^h$ increase in a step-wise fashion for increasing $h$, and their ordering remains constant, except for entries that were $0$ at iteration $h-1$ and became $\frac{1}{h}$ at iteration $h$. Hence, because our metric distance matrices satisfy monotonicity conditions and because $P^h$ is optimal for $D_\mathrm{disc}^h$ according to Lemma \ref{lemma:discrete_distance}, it follows that
$$
\left \langle P^{h}, D_{\mathrm{Ham}}^{h-1}\right \rangle \leq \left \langle P, D_{\mathrm{Ham}}^{h-1}\right \rangle \; \forall P \in \Gamma.
$$
Therefore, $P^h$ is also optimal for $f^{h-1}_{\mathrm{\scriptscriptstyle{WL}}}(G)$ embeddings.
\end{proof}

\subsection{Proof of Theorem \ref{th:neg}}
\label{proof:theorem}
\begin{proof}
Using the same notation as for the Proof in Appendix \ref{proof:discrete_distance} and the formulation in Equation \ref{eq:wassdiscreteformulation}, we can write
 \begin{align*}
D^{f_{\mathrm{\scriptscriptstyle{WL}}}}_W(G,G') &= \min_{P^H \in \Gamma} \left \langle P^H, D_{\mathrm{Ham}}^H\right \rangle \\
 &= \min_{P^H \in \Gamma} \frac{1}{H}\sum_{h=0}^H\langle P^H,  D_{\mathrm{disc}}^h \rangle.
\end{align*}

Let $P^*$ be an optimal solution for iteration $H$. Then, from Lemmas \ref{lemma:discrete_distance} and \ref{lemma:besttransplan}, it is also an optimal solution for $D^H_{\mathrm{disc}}$ and for all $h=0,\ldots,H-1$. We can rewrite the equation as a sum of optimal transport problems:
\begin{equation}
D^{f_{\mathrm{\scriptscriptstyle{WL}}}}_W(G,G') = \frac{1}{H} \sum_{h=0}^H \min_{P^* \in \Gamma} \:\langle P^*, D_{\mathrm{disc}}^h \rangle.
\end{equation}
This corresponds to a sum of 1-dimensional optimal transport problems relying on the discrete metric, which were shown to be conditional negative functions \citep{gardner2017definiteness}. Therefore, the final sum is also conditional negative definite.
\end{proof}

\subsection{Data sets and additional results}
\label{app:additional_ds}

\begin{table*}[b]
\caption{Description of the experimental data sets}
\begin{center}
\begin{sc}
\resizebox{\textwidth}{!}{
    \begin{tabular}{lcccccccccc}
    \toprule
    Data set & Class Ratio  & Node Labels & Node Attributes & Edge Weights & $\#$ Graphs  &  Classes\\
    \midrule
    MUTAG & $63/125$ & \checkmark & - & - & $188$ & $2$\\ 
    NCI1 & $2053/2057$ & \checkmark & - & - & $4110$ & $2$\\
    PTC-MR & $152/192$& \checkmark & - & - & $344$ & $2$\\
    D$\&$D & $487/691$ &\checkmark & - & - & $1178$ & $2$\\
    \midrule 
    ENZYMES & $100$ per class &\checkmark & \checkmark & - & $600$ & $6$\\
    PROTEINS & $450/663$ &\checkmark & \checkmark & - & $1113$ & $2$\\
    \midrule
    BZR & $86/319$ &\checkmark & \checkmark & - & $405$ & $2$\\
    COX2 & $102/365$ &\checkmark & \checkmark & - & $467$ & $2$\\
    SYNTHIE & $100$ per class & - & \checkmark & - & $400$ & $4$ \\
    IMDB-BINARY & $500/500$ & - & (\checkmark) & - & $1000$ & $2$\\
    SYNTHETIC-NEW & $150/150$ & - & \checkmark & - & $300$ & $2$ \\
    \midrule
    BzR-MD & $149/157$ &\checkmark & - & \checkmark & $306$ & $2$\\
    COX2-MD & $148/155$ &\checkmark & - & \checkmark & $303$ & $2$\\
    \bottomrule
    \end{tabular}
}
\end{sc}
\end{center}
\label{table:data}
\vskip -0.1in
\end{table*}
We report additional information on the data sets used in our experimental comparison in Supplementary Table \ref{table:data}.
Our data sets belong to multiple chemoinformatics domains, including small molecules (\textsc{MUTAG}, \textsc{PTC-MR}, \textsc{NCI1}), macromolecules (\textsc{ENZYMES}, \textsc{PROTEINS}, \textsc{D\&D}) and chemical compounds (\textsc{BZR}, \textsc{COX2}). We further consider a movie collaboration data set (\textsc{IMDB}, see~\citep{yanardag2015deep} for a description) and two synthetic data sets \textsc{Synthie} and \textsc{Synthetic-new}, created by~\citet{morris2016faster} and \citet{feragen2013scalable}, respectively.
The \textsc{BZR-MD} and \textsc{COX2-MD} data sets do not have node attributes but contain the atomic distance between each connected atom as an edge weight. We do not consider distances between non-connected nodes \citep{kriege2012subgraph} and we equip the node with one-hot-encoding categorical attributes representing the atom type, i.e., what is originally intended as a categorical node label. On \textsc{IMDB-B}, \textsc{IMDB-BINARY} was used with the node degree as a \mbox{(semi-)continuous} feature for each node~\citep{yanardag2015deep}. For all the other data sets, we use the off-the-shelf version provided by \citet{repoker2016}.

Results on synthetic data sets are provided in Table \ref{app:table_synth}. We decided not to include those in the main manuscript because of the severely unstable and unreliable results we obtained. In particular, for both data sets, there is a high variation among the different methods. Furthermore, we experimentally observed that even a slight modification of the node features (e.g., normalisation or scaling of the embedding scheme) resulted in a large change of performances (up to $15\%$). Additionally, it has been previously reported \citep{morris2016faster,feragen2013scalable} that on \textsc{Synthetic-new}, a WL with degree treated as categorical node label outperforms the competitors, suggesting that the continuous attributes are indeed not informative. Therefore, we excluded these data sets from the main manuscript, as we concluded that they could not fairly assess the quality of our methods.

\begin{table*}[t]
\caption{Classification accuracies on synthetic graphs with continuous node attributes. Comparison of hash graph kernel (HGK-WL, HGK-SP), GraphHopper kernel (GH), and our method (WWL).}
\label{app:table_synth}
\vskip 0.15in
\begin{center}
\begin{small}
\begin{sc}
 \setlength{\tabcolsep}{0.5pt}
    \begin{tabular}{lcccccccc}
    \toprule
    Method & SYNTHIE & SYNTHETIC-new \\
    \midrule
    VH-C & $27.51\pm 0.00$ & $60.60 \pm 1.60$\\
    RBF-WL &  $94.43 \pm 0.55$ & $86.37 \pm 1.37$ \\
    \midrule
    HGK-WL    & $81.94 \pm 0.40$ & $\mathbf{95.96 \pm 0.25^\ast}$   \\
    HGK-SP & $85.82 \pm 0.28$ & $80.43 \pm 0.71$ \\
    \midrule
    GH  &  $83.73 \pm 0.81$ & $88.83 \pm 1.42$ \\
    \midrule
    WWL   & $\mathbf{96.04 \pm 0.48^\ast}$ & $86.77 \pm 0.98$\\
    \bottomrule
    \end{tabular}
\end{sc}
\end{small}
\end{center}
\vskip -0.1in
\end{table*}

\subsection{Details on hyperparameter selection}
\label{app:hypersel}
The following ranges are used for the hyperparameter selection: the parameter of the SVM $C = \{10^{-3},\ldots, 10^{3}\}$ (for continuous attributes) and $C = \{10^{-4},\ldots, 10^{5}\}$ (for categorical attributes); the WL number of iterations $h=\{0,\ldots, 7\}$; the $\lambda$ parameter of the WWL $\lambda=\{10^{-4},\dots,10^{1}\}$. For RBF-WL and VH-C, we use default $\gamma$ parameter for the Gaussian kernel, i.e., $\gamma = 1/m$, where $m$ is the size of node attributes. 
For the GH kernel, we also fix the $\gamma$ parameter to $1/m$. For HGK, we fix the number of iterations to $20$ for each data set, except for SYNTHETICnew where we use $100$ (these setups were suggested by the respective authors~\citep{morris2016faster,feragen2013scalable}. Furthermore, because HGK is a randomised method, we compute each kernel matrix $10$ times and average the results. 
When the dimensionality of the continuous attributes $m>1$, these are normalised to ensure comparability among the different feature scales, in each data set except for BZR and COX2, due to the meaning of the node attributes being location coordinates.

\subsection{Runtime comparison}
\label{app:runtime}

Overall, we note that WL and WL-OA scale linearly with the number of nodes; therefore, these methods are faster than our approach. Because of the differences in programming language implementations of the different methods, it is hard to provide an accurate runtime comparison. However, we empirically observe that the Wasserstein graph kernels are still competitive, and a kernel matrix can be computed in a median time of $~40$ s, depending on the size and number of graphs (see Figure~\ref{fig:runtime}). For the continuous attributes, our approach has a runtime comparable to GH. However, although our approach can benefit from a significant speedup (see discussion below and Section \ref{sec:expsetup}), GH was shown to empirically scale quadratically with the number of graph nodes~\citep{feragen2013scalable}. The HGK, on the other hand, is considerably slower, given the number of iterations and multiple repetitions while taking into account the randomisation.

To evaluate our approach with respect to the size of the graphs and recalling that computing the Wasserstein distance has complexity $\mathcal{O}(n^3 log(n))$, we simulated a fixed number of graphs with a varying average number of nodes per graph. We generated random node embeddings for $100$ graphs, where the number of nodes is taken from a normal distribution centered around the average number of nodes. We then computed the kernel matrix on each set of graphs to compare the runtime of regular Wasserstein with the Sinkhorn regularised optimisation. As shown in Supplementary Figure~\ref{fig:runtime}, the speedup starts to become beneficial at approximately 100 nodes per graph on average, which is larger than the average number of nodes in the benchmark data sets we used.

To ensure good performance when using the Sinkhorn approximation, we evaluate the obtained accuracy of the model. 
Recalling that the Sinkhorn method solves the following entropic regularisation problem,
$$P^\gamma = \argmin_{P \in \Gamma(X,X')} \left \langle P, M \right \rangle - \gamma h(P),$$
we further need to select $\gamma$. Therefore, on top of the cross-validation scheme described above, we further cross-validate over the regularisation parameter values of $\gamma \in \{0.01, 0.05, 0.1, 0.2, 0.5, 1, 10\}$ for the \textsc{Enzymes} data set and obtain an accuracy of $72.08 \pm 0.93$, which remains above the current state of the art. Values of $\gamma$ selected most of the time are $0.3$, $0.5$, and $1$.

\begin{figure*}[t]
\vskip 0.2in
  \centering
  \includegraphics[width=0.9\columnwidth]{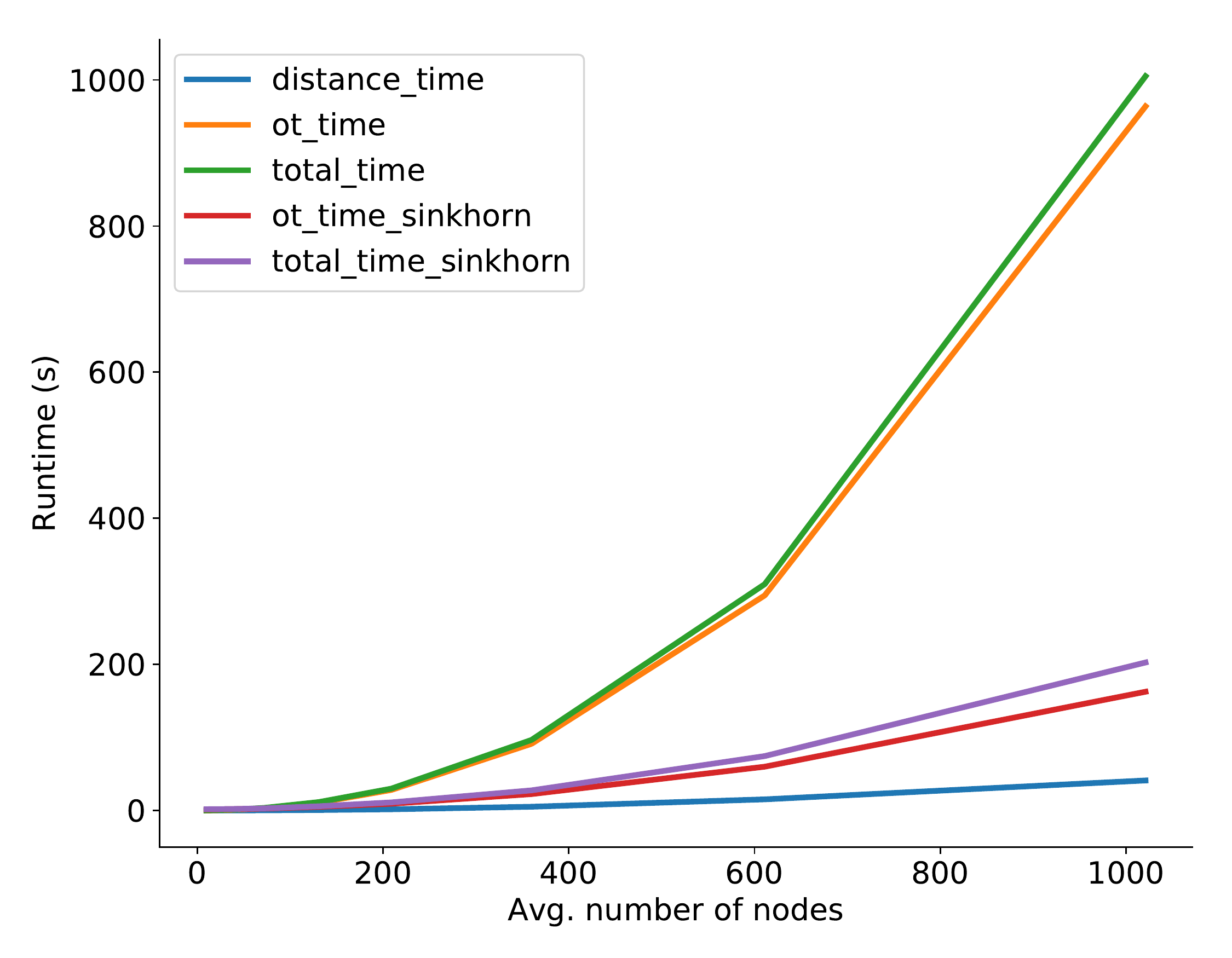}
  \includegraphics[width=0.9\columnwidth]{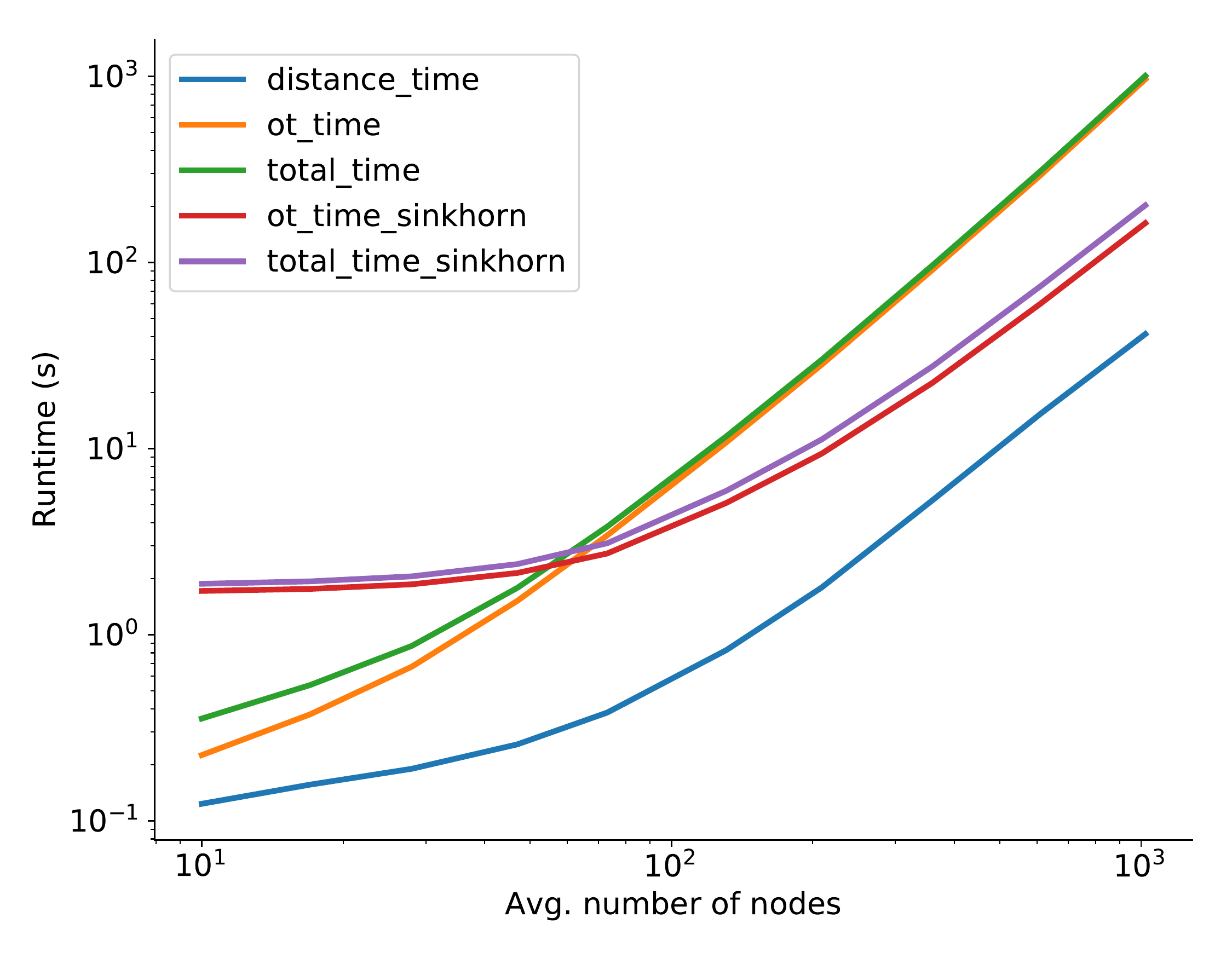}
\caption{Runtime performance of the WWL Kernel computation step with a fixed number of graphs. We also report the time taken to compute the ground distance matrix as $\texttt{distance\_time}$. Here, $\texttt{total\_time}$ is the sum of the time to compute the ground distance and the time taken to solve the optimal transport (ot) problem for the regular solver or the Sinkhorn-regularised one.
The logarithmic scale on the right-side figure shows how, for a small average number of nodes, the overhead to run Sinkhorn is higher than the benefits.}
\label{fig:runtime}
\vskip -0.2in
\end{figure*}

\subsection{Performance on isomorphic synthetic graphs}
\label{app:isomorphic}
We performed an additional experiment to evaluate the difference between WL and WWL for noisy  Erd\H{o}s--R\'enyi graphs~($n = 30$, $p = 0.2$). We report the relative distance between $G$ and its permuted and perturbed variant $G'$, w.r.t.\ a third independent graph $G''$ for an increasing noise level~(i.e., edge removal) in Figure~\ref{fig:ergraph}. We see that WWL is more robust against noise. 

\begin{figure*}[t]
\vskip 0.2in
  \centering
  \includegraphics[width=0.9\columnwidth]{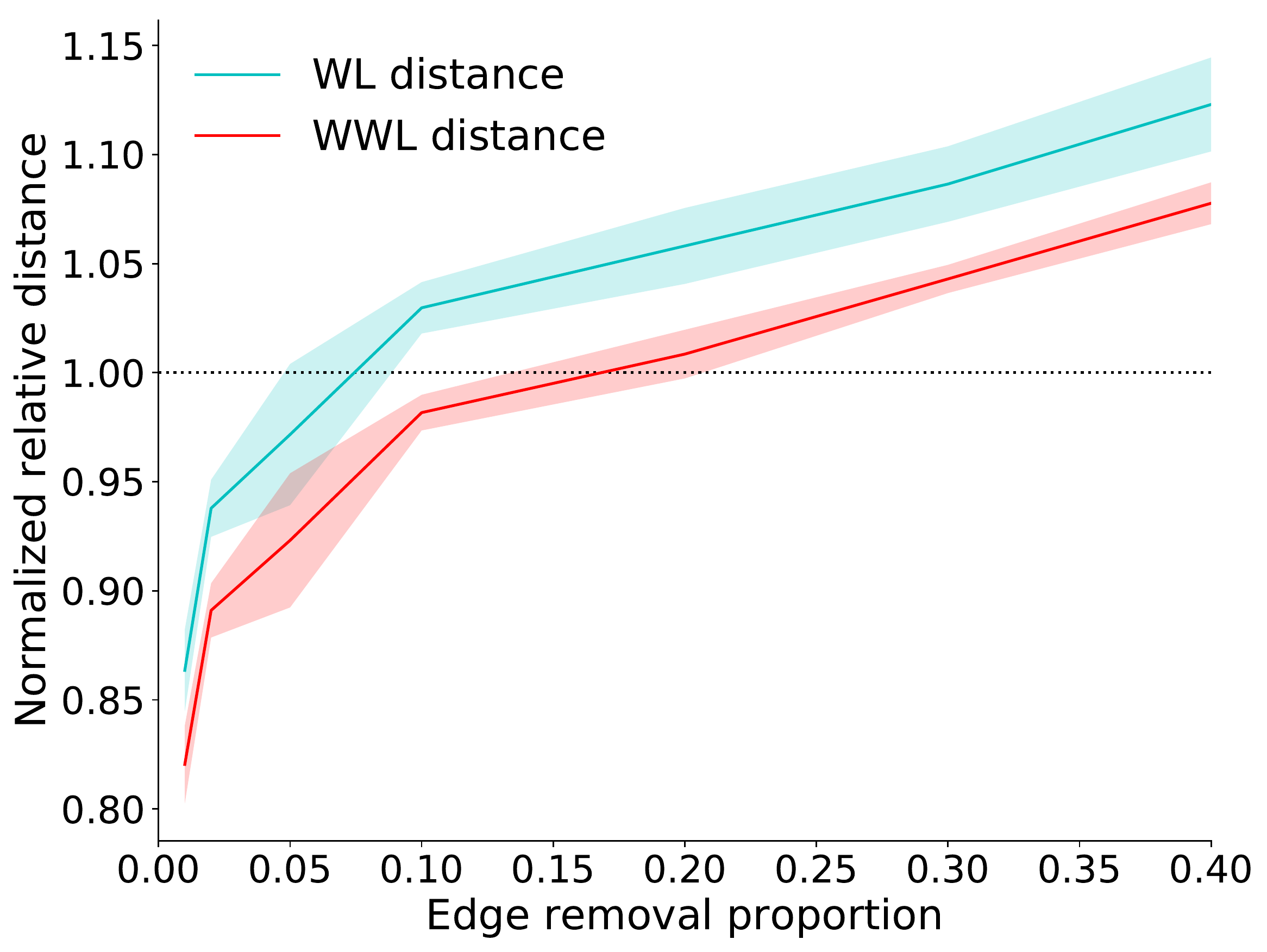}
\caption{Relative distance between (Erd\H{o}s--R\'enyi) graph $G$ and its permuted and perturbed variant $G'$ w.r.t.\ a third independent graph $G''$ for an increasing noise level.}
\label{fig:ergraph}
\vskip -0.2in
\end{figure*}

\end{document}